\documentclass{article}
\usepackage[utf8]{inputenc}

\usepackage{arxiv}

\usepackage[utf8]{inputenc} 
\usepackage[T1]{fontenc}    
\usepackage{hyperref}       
\usepackage{booktabs}       
\usepackage{amsfonts}       
\usepackage{nicefrac}       
\usepackage{microtype}      
\usepackage[dvipsnames]{xcolor}
\usepackage{wrapfig}
\usepackage{enumitem}
\usepackage{multirow} 

\usepackage{amsmath}
\usepackage{amssymb}
\usepackage{amsthm}
\usepackage{bbm} 
\usepackage{stmaryrd}

\usepackage{natbib}
\setcitestyle{authoryear,open={(},close={)}}
\usepackage[ruled]{algorithm2e}
\makeatletter
\newcounter{framework}
\newenvironment{framework}[1][htb]
  {
   \let\c@algocf\c@framework
   \begin{algorithm}[#1]%
  }{\end{algorithm}}

\usepackage{thm-restate}

\usepackage{tikz}
\usetikzlibrary{automata, positioning, arrows.meta}

\DeclareMathOperator*{\argmin}{arg\,min}
\DeclareMathOperator*{\argmax}{arg\,max}
\newcommand{\R}{\mathbb{R}}
\newcommand{\N}{\mathbb N}
\newcommand{\E}{\mathbb{E}}
\newcommand{\Var}{\mathbb{V}\mathrm{ar}}
\newcommand{\bbP}{\mathbb{P}}
\newcommand{\bbQ}{\mathbb{Q}}

\newcommand{\cH}{\mathcal{H}}

\newcommand{\cF}{\mathcal{F}}
\newcommand{\cG}{\mathcal{G}}

\newcommand{\cC}{\mathcal{C}}
\newcommand{\pr}{\mathbb{P}}

\newcommand{\diam}{\mathrm{diam}}

\newcommand{\Reg}{\mathrm{Reg}}

\newcommand{\cM}{\mathcal{M}}

\newcommand{\cS}{\mathcal{S}}
\newcommand{\cA}{\mathcal{A}}
\newcommand{\cP}{\mathcal{P}}

\newcommand{\sumT}{\sum_{t=1}^T}
\newcommand{\sumH}{\sum_{h=1}^H}

\newcommand{\llVert}{\left\lVert}
\newcommand{\rrVert}{\right\rVert}
\newcommand{\htheta}{\hat\theta}

\newcommand{\MDP}{\mathrm{MDP}}

\makeatletter
\newcommand*\rel@kern[1]{\kern#1\dimexpr\macc@kerna}
\newcommand*\widebar[1]{%
  \begingroup
  \def\mathaccent##1##2{%
    \rel@kern{0.8}%
    \overline{\rel@kern{-0.8}\macc@nucleus\rel@kern{0.2}}%
    \rel@kern{-0.2}%
  }%
  \macc@depth\@ne
  \let\math@bgroup\@empty \let\math@egroup\macc@set@skewchar
  \mathsurround\z@ \frozen@everymath{\mathgroup\macc@group\relax}%
  \macc@set@skewchar\relax
  \let\mathaccentV\macc@nested@a
  \macc@nested@a\relax111{#1}%
  \endgroup
}
\makeatother

\renewcommand{\bar}{\widebar}
\renewcommand{\tilde}{\widetilde}
\renewcommand{\hat}{\widehat}
\renewcommand{\le}{\leq}
\renewcommand{\epsilon}{\varepsilon}

\makeatletter
\newcommand{\Biggg}[1]{{\hbox{$\left#1\vbox to 25\p@{}\right.\n@space$}}}
\newcommand{\Bigggg}[1]{{\hbox{$\left#1\vbox to 40\p@{}\right.\n@space$}}}
\makeatother

\newif\ifinappendix
\newcommand{\mainonly}[1]{\ifinappendix\else#1\fi}
\newcommand{\appendixonly}[1]{\ifinappendix#1\fi}

\title{
Minimax Optimal Variance-Aware Regret Bounds for Multinomial Logistic MDPs}
\author{%
  Pierre Boudart \\
  INRIA, École Normale Supérieure\\
  CNRS, PSL Research University\\
  Paris, France\\
  \texttt{pierre.boudart@inria.fr} \\
  \And
  Pierre Gaillard \\
  Univ. Grenoble Alpes, Inria, \\
  CNRS, Grenoble INP, LJK \\
  Grenoble, France\\
  \texttt{pierre.gaillard@inria.fr} \\
  \AND
  Alessandro Rudi \\
  SDA Bocconi School of Management\\
  Milano, Italy\\
  \texttt{alessandro.rudi@sdabocconi.it}
}

\begin{document}

\maketitle

\begin{abstract}
We study reinforcement learning for episodic Markov Decision Processes (MDPs) 
whose transitions are modelled by a multinomial logistic (MNL) model. 
Existing algorithms for MNL mixture MDPs yield a regret of 
$\smash{\tilde{O}(dH^2\sqrt{T})}$ \citep{li2024provably}, where $d$ is the feature 
dimension, $H$ the episode length, and $T$ the number of episodes. 
Inspired by the logistic bandit literature \citep{abeille2021instance, 
faury2022jointly, boudart2026enjoying}, we introduce a problem-dependent 
constant $\bar\sigma_T \leq 1/2$, measuring the normalised average variance 
of the optimal downstream value function along the learner's trajectory. 
We propose an algorithm achieving a regret of 
$\smash{\tilde{O}(dH^2\bar\sigma_T\sqrt{T})}$, which recovers the existing bound 
in the worst case and improves upon it for structured MDPs. For instance, for KL-constrained robust MDPs, $\bar\sigma_T = O(H^{-1})$, reducing the horizon dependence by a factor $H$. 
We further establish a matching $\smash{\Omega(dH^2\bar\sigma_T\sqrt{T})}$ lower bound, proving minimax optimality (up to logarithmic factors) and fully characterising the regret complexity of 
MNL mixture MDPs for the first time.
\end{abstract}

\section{Introduction}\label{section:introduction}

Reinforcement Learning (RL) is a control-theoretic problem where a learner aims to maximise its expected cumulative reward by sequentially interacting with an unknown environment \citep{sutton1998reinforcement}.
In practice, modern RL tackles problems with large numbers of states, requiring function approximation to approximate the state-action value function -- the expected cumulative reward starting from a state-action pair -- or the policy -- the mapping from states to actions.
RL with function approximation has been very successful in different applications including Backgammon \citep{tesauro1994td}, Atari games \citep{mnih2015human}, Go \citep{silver2017mastering}, robotics \citep{kober2013reinforcement}, and Dialogue generation \citep{li2016deep}.
Formally, the learner interacts with the environment over $T$ episodes, each consisting of $H$ steps. At each step $h$ of episode $t$, the learner observes a state $s_{t,h}\in\cS$, selects an action $a_{t,h}\in\cA$ according to a chosen policy $\pi_t$, receives a reward $\smash{r_h(s_{t,h}, a_{t,h}) }$, and transitions to a next state $\smash{ s_{t,h+1} }$ drawn from an unknown transition kernel $ \smash{ p^*_h(\cdot|s_{t,h}, a_{t,h}) }$. The learner's goal is to maximise $\smash{ V^{\pi_t}(\cdot) }$, its expected reward along each episode $t$ by following $\pi_t$ (see Equation~\eqref{eq: def value function}); or equivalently to minimise the regret $\smash{\Reg_T := \sum_t V_1^{\pi^*}(s_{t,1}) - V_{1}^{\pi_t}(s_{t,1}) }$ against an optimal policy $\pi^*$.

This empirical success has led to extensive theoretical work on provable guarantees for RL with function approximation.
The first line of research considers linear function approximation \citep{jiang2017contextual, jin2020provably, zhou2021provably} where the reward and transition functions are assumed to be linear
in a feature mapping. However, assuming linear transition models in Markov Decision Processes (MDPs) is fundamentally restrictive. Indeed, not all transition probabilities can be accurately captured by linear models \citep[Proposition~1]{hwang2023model}. Moreover, \cite{jin2020provably} show that when the linear model is misspecified, the algorithm may suffer linear regret. 

\cite{hwang2023model} proposed modelling the transition probabilities using a multinomial logistic model, which naturally ensures valid probability distributions over the state space and, unlike linear models, automatically enforces non-negativity and normalisation.
They proposed an algorithm that achieves a regret bound of $ \smash{\tilde O ( \kappa d H^2 \sqrt{T} )} $, where $d$ is the feature dimension, $H$ is the episode length, $T$ is the number of episodes, and $\kappa > 1 $ is a problem-dependent quantity. However, the dependence on $\kappa$ is problematic because it scales polynomially with the number of reachable states, potentially negating the benefits of function approximation in environments with large state spaces. Moreover, it may scale exponentially with respect to the problem parameters.
To resolve this, \cite{li2024provably} designed an improved algorithm that removes $\kappa$ from the dominant regret term, achieving a regret bound of $ \smash{\tilde O(d H^2 \sqrt{T} )} $ alongside constant computational efficiency per episode. They also established a regret lower bound of $ \smash{ \Omega( d H \sqrt{T/\kappa_*}) } $ where $\kappa_*>1$ is a problem-dependent constant measuring the non-linearity of the transition probability at the optimum. Then, \cite{park2024infinite} improved the regret lower bound to $ \smash{ \Omega( d H^{3/2} \sqrt{T}) } $. 
Despite this progress, a fundamental gap remains. Not only is there a $\smash{\sqrt{H}}$ discrepancy between the upper and lower bounds, but the existing $\smash{\tilde O(d H^2 \sqrt{T})}$ algorithms are also strictly worst-case. In particular, they fail to adapt to the structure of the environment, e.g. when transition errors have little effect on rewards.

In parametric bandits, a field closely related to RL, the extension of linear bandits to logistic bandits for binary feedback \citep{faury2020improved} and to multinomial logistic bandits for multiclass feedback \citep{amani2021ucb} has been extensively studied. 
In these parallel settings, the problem-dependent constant $\kappa$ was also initially viewed as a curse worsening the regret bounds.
\cite{faury2020improved} and \cite{amani2021ucb} have shown that the regret is independent of it.
In the binary setting, \cite{abeille2021instance} and \cite{faury2022jointly} provided a minimax optimal regret bound of $\smash{ \tilde O( d \sqrt{T/\kappa_*}) }$ where $\kappa_*>1$ is a problem-dependent constant measuring the non-linearity of the problem at the optimum.
Recently, \cite{boudart2026enjoying} extended these results from binary to multinomial logistic bandits. They generalised the non-linearity constant $\kappa_*$ to the multinomial framework, and designed an efficient algorithm that leverages the problem's non-linearity to yield a minimax optimal regret bound of $\smash{ \tilde{O}(d \sqrt{KT/\kappa_*}) }$ where $K$ is the number of outputs. This raises the central question of our work:

\smallskip
\hspace*{1em}\emph{Can we extend these results to MDPs with MNL transitions, effectively leveraging non-linearity to yield improved regret bounds?}

\paragraph{Main contributions}
\begin{itemize}[topsep=-\parskip, leftmargin=*]

    \item \textbf{A new problem-dependent constant.} We define $\bar\sigma_{T}$ as 
    the normalised average variance of the optimal next-state value function along 
    the learner's trajectory:
    \begin{equation*}
        \bar \sigma_{T}^2 := \frac{1}{TH} \sumT \sumH \frac{1}{H^2} 
        \Var_{p^{*}}\!\left[ V_{h+1}^{*}(s') \mid s_{t,h}, 
        \pi^*_h(s_{t,h}) \right] \,.
    \end{equation*}
    Intuitively, $\bar\sigma_{T}$ captures the idea that transition estimation errors 
    are harmless when all likely next states yield similar optimal downstream rewards. 
    Since $V^*_{h+1} \in [0, H-h]$, one has $\bar\sigma_T^2 \leq 1/4$ in the 
    worst case, but $\bar\sigma_T$ can be significantly smaller for structured MDPs. 
    This definition generalises the non-linearity constant of \citet{boudart2026enjoying} 
    from multinomial logistic bandits to MNL mixture MDPs: unlike in the bandit setting, 
    the ``reward'' of a transition in an MDP is the downstream value $V_{h+1}^*(\cdot)$, 
    which must be taken into account in the definition of $\bar\sigma_T$.

    \item \textbf{A minimax-optimal algorithm.} We introduce 
    Algorithm~\ref{algo:learning routine 2}, whose regret satisfies 
    (Theorem~\ref{theorem:regret bound}):
    \begin{equation*}
        \Reg_T \le \tilde{O}\!\left( d H^{2} \bar\sigma_T \sqrt{T} 
        \log \delta^{-1} \right) \qquad \text{w.p. } 1-5\delta \,,
    \end{equation*}
    answering the above question positively. 
    Since $\bar\sigma_T \leq 1/2$, this recovers the existing 
    $\tilde{O}(dH^2\sqrt{T})$ bound of \citet{li2024provably} in the worst case, 
    and strictly improves upon it whenever the MDP exhibits favourable structure. 
    In particular, for KL-constrained robust MDPs \citep{nilim2005robust, 
    iyengar2005robust}, $\smash{\bar\sigma_T = O(H^{-1})}$ 
    (Proposition~\ref{prop:regret bound KLCR MDP}), reducing the horizon dependence by a factor $H$ compared to the best existing results.
    
    \item \textbf{A matching lower bound.} We prove in 
    Theorem~\ref{theorem:lower bound} that for any algorithm there exists an MNL 
    mixture MDP such that
    \begin{equation*}
        \Reg_T \ge \Omega\!\left(dH^{2} \bar\sigma_T \sqrt{T} \right) \,.
    \end{equation*}
    This lower bound matches our upper bound up to logarithmic factors, establishing 
    the minimax optimality of our algorithm and confirming that the 
    $\bar\sigma_T$ scaling cannot be improved in general. This is the first time 
    that upper and lower bounds align for MNL mixture MDPs, fully characterising 
    the regret complexity of this setting. Notably, the hard instance achieving 
    this lower bound satisfies $\smash{\bar\sigma_T = \Theta(H^{-1/2})}$, which explains 
    why the existing $\smash{\Omega(dH^{3/2}\sqrt{T})}$ lower bound of 
    \citet{park2024infinite} is consistent with (and in fact a special case 
    of) our general result. We provide numerical experiments in 
    Appendix~\ref{appendix:experiments} empirically validating this $\smash{H^{3/2}}$ 
    rate on this instance.

\end{itemize}

\bigskip
We summarise existing algorithms that tackle MNL mixture MDPs in Table~\ref{table:regret comparison}. 

\begin{table}[!ht]
    \centering
    \begin{tabular}{ccc}
        \hline
        & Reference & Regret \\
        \hline
        \multirow{3}{2cm}{\centering Upper bound} 
        & \citep{hwang2023model} & $\tilde O ( \kappa d H^2 \sqrt{T} )$ \\
        & \citep{li2024provably} & $\tilde O( B d H^2 \sqrt{T} )$  \\
        & {\bfseries (ours) - Theorem~\ref{theorem:regret bound} } & $\tilde O\left( B d H^{2}\bar\sigma_T \sqrt{T} \right)$ \\
        \hline
        \multirow{2}{2cm}{\centering Lowerbound} 
        & \citep{park2024infinite} & $\Omega(Bd H^{3/2} \sqrt{T})$ \\
        & {\bfseries (ours) - Theorem~\ref{theorem:lower bound} } & $ \Omega\left(BdH^{2} \bar\sigma_T \sqrt{T} \right) $ \\
        \hline
    \end{tabular}
    \caption[]{Comparison of regret bounds, with respect to $B, d, H, T, \kappa$ and $\bar\sigma_T$. Here $B$ denotes the upper bound on the true parameters ($ \lVert \theta_h^* \rVert_2 \le B $) and $\kappa$ scales exponentially with it ($\kappa = \Theta(e^B)$). For simplicity we omit logarithmic terms and other constants. }
    \label{table:regret comparison}
\end{table}

A central component of our analysis is the self-concordance property of the logistic loss, which lies at the core of recent theoretical developments yielding improved regret bounds in the bandit setting \citep{abeille2021instance, faury2022jointly, boudart2026enjoying}. 
The self-concordance enables us to control the curvature at the true parameter $\theta_h^*$ in terms of the curvature at our estimate $\smash{\hat \theta_{t,h}}$.
To exploit this property our algorithm first performs an exploration routine to construct a confidence set $\Theta_h$ small enough around $\theta_h^*$ to ensure the self-concordance property can be applied with only a constant factor penalty, see Section~\ref{section:exploration routine}.

\section{Problem Formulation}

In this section, we introduce our notations and assumptions, and formally recall the setting of episodic MNL mixture MDPs as introduced by \cite{hwang2023model}.

\paragraph{Framework}
The episodic MNL mixture MDP is formalised as a game of $T\in\N$ episodes of $H\in\N$ rounds between a learner and an environment, see Framework~\ref{framework:E MNL MDP} for a short summary. At each round $h\in\llbracket H \rrbracket$ the learner interacts with an episodic MDP.

\begin{framework}[!ht]
\caption{The episodic Multinomial Logistic (MNL) MDP Framework.}
\label{framework:E MNL MDP}
\textbf{Known: }$\cA, \cS, \{r_h\}_h$ and $\phi$ \\
\textbf{Unknown: }$\{\theta_h\}_h$ \\
\For{Each episode \(t\) in \(1 \dots T\)}{
The learner observes $s_{t,1}$ \\
\For{Each round $h$ in $1 \dots H$}{
    The learner plays action \(a_{t,h}\in\cA\) \\
    The learner gets reward $r_h(s_{t,h}, a_{t,h})$ \\
    The environment generates the next state \( s_{t, h+1} \in \cS_{h,s_{t,h},a_{t,h}} \) such that \( \pr_h[s_{t,h+1}|s_{t,h},a_{t,h}] \propto \exp(\phi(s_{t,h+1}|s_{t,h},a_{t,h})^\top \theta^*_h) \) \\
    The learner observes $s_{t,h+1}$.
}}
\end{framework}

An episodic MDP can be denoted by a tuple \( \cM = ( \cS, \cA, H, \{\pr_h\}_{h=1}^H, \{r_h\}_{h=1}^H) \), where $\cS$ is the state space, $\cA$ is the action space, $H$ is the length of each episode, $\pr_h : \cS \times \cA \times \cS \to [0,1] $ is the transition kernel with $ \pr_h(s'|s,a) $ the probability of transferring to state $s'$ from $s$ while taking action $a$ at stage $h$, and $ r_h: \cS \times \cA \to [0,1] $ is the deterministic reward function at stage $h$. A policy $ \pi = \{ \pi_h \}_{h=1}^H $ is a collection of mappings $ \pi_h : \cS \to \Delta(\cA) $ for every stage $h$. For any policy $\pi$ and $(s,a)\in\cS\times\cA$, we define the action-value function $ Q_h^\pi $ and value function $ V_h^\pi $ as follows:
\begin{equation}\label{eq: def value function}
    Q_h^\pi(s,a) = \E \left[ \sum_{h'=h}^H r_{h'}(s_{h'},a_{h'}) \bigg| s_h=s, a_h=a \right], \qquad V_h^\pi(s)= \E_{a\sim\pi_h(.|s)} [Q_h^\pi(s,a)]
\end{equation}
where the expectation of $Q_h^\pi$ is taken over $\pr$ and $\pi$. The optimal value function $V_h^*$ and action-value function $Q_h^*$ are given by $ V_h^* = \sup_\pi V_h^\pi(s) $ and $ Q_h^*(s,a) = \sup_\pi Q_h^\pi(s,a) $.

The learner interacts with the MDP for $T$ episodes without knowledge of the transition kernel $\pr$. At the beginning of each episode $t$, the learner selects a policy $\smash{\pi_t = \{\pi_{t,h}\}_{h=1}^H}$, and the episode begins at state $s_{t,1}$.
We assume the reward function $ \{r_h\}_h $ to be known to the learner. The goal of the learner is to minimise its regret defined as
\begin{equation*}
    \Reg_T := \sumT V_1^*(s_{t,1}) - \sumT V_1^{\pi_t}(s_{t,1}) 
\end{equation*}
where the optimal policy $\pi^*$ achieves the highest value at every state $s\in\cS$; we denote $\pi^*_h =\argmax_{\pi} V_h^\pi (s) $ for each round $h\in \llbracket H \rrbracket$.
Before formally defining MNL mixture MDPs, we first introduce the notion of \emph{reachable states}. We consider an \emph{inhomogeneous} MDP setting, in which the transition probabilities may vary over the course of an episode.

\begin{restatable}{definition}{}[Reachable States]
For any $ (h, s, a) \in \llbracket H \rrbracket \times \cS \times \cA $ we define the \emph{reachable states} as $ \cS_{h,s,a} := \{ s' \in \cS \;|\; \pr_h(s'|s,a) > 0 \} $. Moreover, we define $ N_{h,s,a} = | \cS_{h,s,a}|  $ and denote by $ U:= \max_{h,s,a} N_{h,s,a} $ the maximum number of reachable states. For any episode $t$ and any stage $h$ we define $ \cS_{t,h} := \cS_{h,s_{t,h}, a_{t,h}} $.
\end{restatable}
We can now formally introduce the definition of MNL mixture MDPs.
\begin{restatable}{definition}{}[MNL mixture MDP]
An MDP instance $ \cM = (\cS, \cA, H, \{\pr_h\}_h, \{r_h\}_h ) $ is called an inhomogeneous, episodic $B$-bounded MNL mixture MDP if there exists a known feature mapping $ \phi(s'|s,a): \cS \times \cA \times \cS \to \R^d $ with $ \lVert \phi(s'|s,a) \rVert_2 \le 1 $ and unknown vectors $ \{\theta_h^*\}_{h=1}^H \in \Theta $ with $ \Theta = \{\theta\in\R^d : \lVert \theta \rVert_2 \le B \} $, such that for all $(s,a,h) \in \cS \times \cA \times \llbracket H \rrbracket $ and $ s' \in \cS_{h,s,a} $ it holds that 
\begin{equation*}
    \pr_h(s'|s,a) = \dfrac{\exp(\phi(s'|s,a)^\top \theta_h^*)}{\sum_{ s'' \in \cS_{h,s,a}} \exp(\phi( s'' |s,a)^\top \theta_h^*) } \,.
\end{equation*}
\end{restatable}

For any $\theta\in\R^d$ we define the induced transition as $ p_{s,a}^{s'}(\theta) = \dfrac{\exp(\phi(s'|s,a)^\top \theta)}{\sum_{ s'' \in \cS_{h,s,a}} \exp(\phi( s'' |s,a)^\top \theta) } $.

\paragraph{Problem-dependent constants $\kappa$ and $ \protect \bar\sigma_T $}
As detailed in the introduction, a key aspect of the binary and multinomial logistic bandits arises from the non-linearity of the softmax function. It is captured both globally and locally at the optimum by the problem-dependent quantities $\kappa$ \citep{faury2020improved, amani2021ucb} and $\kappa_*$ \citep{boudart2026enjoying} respectively. 
\cite{hwang2023model} demonstrated that the global constant $\kappa$ could be extended to the MNL mixture MDP setting. Following the idea of \cite{boudart2026enjoying}, we propose a new definition to quantify the non-linearity at the optimum.
In particular, it depends on the rewards of the optimal policy with the optimal transitions, a feature absent from previous definitions.
Formally $\bar\sigma_{T}$ is defined as follows:
\begin{equation*}
    \bar \sigma_{T}^2 := \dfrac{1}{TH} \sumT \sumH \dfrac{1}{H^2} \Var_{p^{*}} \left[ V_{h+1}^{*} (s') | s_{t,h}, \pi^*_h(s_{t,h}) \right] \,.
\end{equation*}
Intuitively, $\smash{\bar\sigma_{T}}$ represents the averaged normalised variance of the optimal next-state value function, conditioned on the learner taking the optimal action along the trajectory $\smash{(s_{t,h})_{t,h}}$. 
The $\smash{1/H^2}$ scaling normalises the variance so that $\smash{\bar\sigma_{T} \in [0, 1/2]}$, ensuring that our measure of non-linearity lies in a constant range independent of the horizon $H$. 
This definition provides a direct bridge to the bandit literature, when $H=1$ and there is no state space $\cS$. The problem reduces to an MNL bandit where $\smash{V_{h+1}^*(s')}$ simplifies to the immediate reward of outcome $s'$. In this regime $\smash{\bar\sigma_T^2}$ exactly recovers the variance of the optimal arm's reward, which corresponds to the inverse of the bandit non-linearity constant $\kappa_*^{-1}$ used to achieve the $\smash{\tilde{O}(d\sqrt{T/\kappa_*})}$ bounds in \cite{abeille2021instance} and \cite{boudart2026enjoying}.
Variance-based quantities of this kind have a long history in online learning, \cite{cesa2007improved,gaillard2014second} showed that the regret of prediction with expert advice can be bounded by the cumulative variance along the trajectory.
Because our regret bound scales with the square root of this average variance, the learning problem becomes significantly easier (i.e., $\bar \sigma_{T}$ is small) in two highly favourable, yet common, scenarios:
\begin{itemize}[nosep, leftmargin=*]
    \item[-] Deterministic optimal dynamics: if taking the optimal action almost certainly leads to a specific next state. The variance drops to zero, reflecting that the learner does not need to learn the transition probabilities with high precision to act optimally.
    \item[-] Downstream value equivalence: Even if transitions are highly stochastic, if the probability mass under the optimal policy is distributed over states with nearly identical downstream returns ($\smash{V_{h+1}^{\pi^*}}$), the variance is inherently small. 
\end{itemize}
Proposition~\ref{prop:regret bound KLCR MDP} exhibits a classical family of MDPs 
for which $\smash{\bar\sigma_{T} \leq \log U / (H\eta)}$, where $\eta > 0$ is a 
robustness parameter of the MDP. Furthermore, if $\bar\sigma_T$ were evaluated 
under the state distribution induced by the optimal policy $\pi^*$ rather than 
along the learner's trajectory $(s_{t,h})_{t,h}$, the bound would tighten to 
$\smash{\E_{\pi^*}[\bar\sigma_T^2] \leq \frac{1}{4H}}$ for any MDP 
(Proposition~\ref{prop:bound bar sigma optimal trajectory}).

\paragraph{Assumptions}
We summarise our assumptions below. Most are standard in the literature \citep{hwang2023model, li2024provably}, while the last assumption is new to this work.
\begin{itemize}[nosep, topsep=-\parskip, leftmargin=*]
    \item The norm of the features is bounded by 1: for all $\smash{s, a, s' \in \cS \times \cA \times \cS, \lVert \phi(s'|s,a) \rVert_2 \le 1 }$.
    \item For each round $h\in \llbracket H \rrbracket$, the norm of the parameter $\theta_h^* \in \R^d$ is bounded by $B\ge 1$: $\lVert \theta_h^* \rVert_2 \le B$. The bound $B$ is known.
    \item For each round $h\in \llbracket H \rrbracket$, the reward function $r_h: \cS \times \cA \to [0,1]$ is known.
    \item There exists $\kappa >1$ such that for all $ h,s,a \in \llbracket H \rrbracket \times \cS \times \cA $ and $ s',s''\in \cS_{h,s,a} $ and for all $\theta$ such that $\lVert \theta \rVert_2 \le B $, we assume
    \begin{equation}\label{eq:definition kappa}
        p_{s,a}^{s'}(\theta) p_{s,a}^{s''}(\theta) \ge \dfrac{1}{\kappa} \,.
    \end{equation}
    \item There exists $\rho >0 $ such that for every stage $h\in\llbracket H \rrbracket$, we have 
    \begin{equation}\label{eq:definition rho}
        \max_{s,a\in\cS\times\cA} \max_{s'\in\cS_{h,s,a}} \lVert \phi(s'|s,a) \rVert_2 \ge \rho > 0 \,.
    \end{equation}
\end{itemize}
Note that the assumption $ \max_{s,a,s'} \lVert \phi(s'|s,a) \rVert_2 \le 1 $ is made without loss of generality. Indeed, the norm of the features can be transferred to the norm of $\theta_h^*$. The assumption on $\kappa$ allows us to lower-bound the eigenvalues of the Hessian of the logistic loss.

\paragraph{Additional Notations}
Given a compact set $\Theta$, we define its diameter under an action space $\cA$ and a state space $\cS$ as
\begin{equation*}
    \diam_{\cA, \cS, h}(\Theta) := \max_{a\in\cA} \max_{s\in\cS} \max_{s'\in\cS_{h,s,a}} \max_{\theta_1, \theta_2 \in \Theta}  | (\theta_1 - \theta_2)^\top \phi(s'|s,a) | \,.
\end{equation*}
The notation $\lesssim$ indicates an inequality up to a universal constant.
We denote by $\ell_{t,h}$ the logistic loss at stage $(t,h)$, defined for all $\theta\in\R^d$ by $\smash{\ell_{t,h}(\theta) := - \log p_{s_{t,h},a_{t,h}}^{s_{t,h+1}} (\theta)}$ 
and by $\smash{H_{t,h}(\theta) := \nabla^2 \ell_{t,h}(\theta) }$ its Hessian.

\section{Algorithm and Regret Analysis}

In this section, we introduce \textit{LIVAROT} (Logistic Inference with Variance-Aware Robust Optimistic Targets), see Algorithm~\ref{algo:learning routine 2}, and derive a bound on its regret. 
\textit{LIVAROT} follows the explore-and-learn paradigm. Following the idea of \cite{abeille2021instance} for binary logistic bandits, the first exploration phase aims to design sufficiently small confidence sets $\Theta_h$ around $\smash{\theta_h^*}$ for every $h\ge 1$.
During the learning phase, \textit{LIVAROT} relies on an optimistic state-value function to address the exploration-exploitation trade-off. To obtain a computationally efficient algorithm, the parameter updates are performed via Online Mirror Descent (OMD) using quadratic approximations of the logistic loss.

\subsection{Exploration Routine}\label{section:exploration routine}
We first introduce our exploration routine (see Algorithm~\ref{algo:exploration routine}) and discuss the challenges associated with it. This exploration routine is then used as an initialisation phase in our main algorithm (see Algorithm~\ref{algo:learning routine 2}).

\begin{algorithm}[!ht]
\caption{\textsc{Exploration\_Routine}}
\label{algo:exploration routine}
\KwIn{Length of the procedure \(\tau\), regularisation parameter \(\lambda_0\)}
{\bfseries{Init:} \(\tilde U_{1,h} = I_d \quad \forall h \in \llbracket H \rrbracket \)} \\
\For{each episode \(k\) in \(1 \dots \tau\)}{
    \For{each stage $h$ in $1 \dots H$}{
        Choose $\smash{a_{t,h}, \tilde s_{t,h}, \tilde s'_{t,h} = \argmax_{a\in\cA} \argmax_{s\in\cS} \argmax_{s'\in\cS_{h,s,a}} \lVert \phi(s'|s,a) \rVert^2_{\tilde U_{t,h}^{-1}}  }$ \\
        Observe $ s_{t,h+1} \sim \pr_h(. | s_{t,h} , a_{t,h}) $ \\
        Get reward $ r_h(s_{t,h}, a_{t,h}) $
    }
    Update $\smash{ \tilde U_{t+1,h} = \tilde U_{t,h} + \tfrac{1}{\kappa} \rho \phi(\tilde s'_{t,h} | \tilde s_{t,h} , a_{t,h} ) \phi(\tilde s'_{t,h} | \tilde s_{t,h} , a_{t,h} )^\top  \quad \forall h \in \llbracket H \rrbracket  }$ 
}
Compute for all $h\in\llbracket H \rrbracket$: 
\begin{multline*}
    \hat \theta_{\tau+1, h} = \argmin_{\theta \in \R^d} \Bigg[  \mathcal{\tilde L}_{\tau+1,h}(\theta) := \sum_{t=1}^\tau -  \log p_{t,h}^{s_{t,h+1}}(\theta) + \dfrac{\lambda_0+1}{2} \lVert \theta \rVert_2^2 \\
    + \sum_{t=1}^\tau - \lVert \phi(\tilde s'_{t,h} | \tilde s_{t,h} , a_{t,h}) \rVert_{\tilde U_{t+1,h}^{-1}} \log p_{\tilde s_{t,h}, a_{t,h}}^{\tilde s_{t,h}'}(\theta)  \Bigg]
\end{multline*} \\
Compute for all $h\in\llbracket H \rrbracket$:
$$\!\!\! A_{\tau+1,h} = \tfrac{1}{\kappa} \sum_{t=1}^\tau \sum_{s'\in\cS_{t,h}} \phi(s'| s_{t,h}, a_{t,h}) \phi(s'| s_{t,h}, a_{t,h})^\top + \tfrac{\rho}{\kappa} \sum_{t=1}^\tau \phi(\tilde s'_{t,h} | \tilde s_{t,h} , a_{t,h} ) \phi(\tilde s'_{t,h} | \tilde s_{t,h} , a_{t,h} )^\top + I_d $$ \\
$ \Theta_h := \left\{ \theta \in \R^d : \lVert \theta - \hat \theta_{\tau+1, h} \rVert^2_{A_{\tau+1, h}} \le \beta^0_{\tau+1}(\delta) \right\} \quad \forall h \in \llbracket H \rrbracket $ see Equation~\eqref{eq:def beta_t}
\end{algorithm}

The goal of our exploration routine is to produce confidence sets $\Theta_h$ such that $\theta_h^*\in\Theta_h$ for all $h\ge 1$ with high probability and $\diam_{\cA, \cS, h}(\Theta_h) \le 1$. This enables us to leverage the self-concordance property \citep[Proposition~8]{sun2019generalized} of the logistic loss at constant cost. For all stages $t,h\ge1$, for all $s,a\in\cS\times\cA$ and $ s'\in\cS_{h,s,a} $ we have w.h.p.:
\begin{equation*}
    H_{t,h}(\theta_1) \preccurlyeq 
    \exp\left(3\sqrt{2} \diam_{\cA,\cS,h}(\Theta_h) \right) H_{t,h}(\theta_2) 
    \preccurlyeq e H_{t,h}(\theta_2) \qquad \forall \theta_1, \theta_2 \in \Theta_h 
\end{equation*}
where we recall that $H_{t,h}$ is the Hessian of the logistic loss $\ell_{t,h}$.
The following lemma shows that such sets $\smash{(\Theta_h)_{h=1}^H}$ can be obtained with a reasonably small exploration length $\tau$. While the theoretical value of $\tau$ may appear large, it reflects a worst-case bound that only affects the lower-order terms in the regret, and substantially smaller values suffice in practice.
The proof is deferred to Appendix~\ref{appendix:proof lemma constant diameter}.

\begin{restatable}{lmm}{LemmaConstantDiameter}\label{lemma:constant diameter}
Let $\delta\in(0,1], \lambda_0 = \sqrt{d\log(\tau H/\delta)} $, 
\begin{align}
    \beta^0_{\tau+1}(\delta) &:= (1+3\sqrt{2}) \left[(B+3) \sqrt{d \log( (\tau+1) H/\delta) } + (\tau+2)^{1/4} \left(\tfrac{\kappa}{\rho}\right)^{3/2} d\log\left( 1 + \tfrac{\tau+1}{d} \right) \right] \label{eq:def beta_t} \\
    \text{and}\qquad
    \tau &:= 4^5 \left(\tfrac{\kappa}{\rho}\right)^{8} (1+3\sqrt{2})^4 d^6 (B+3)^2 \log\left(\tfrac{TH}{\delta}\right) \left[\log\left(1+\tfrac{T}{d}\right)\right]^6 \,. \nonumber
\end{align}
Then the sets $ (\Theta_h)_{h=1}^H $ returned by Algorithm~\ref{algo:exploration routine} satisfy with probability $1-\delta$
\begin{equation*}
    \theta_h^* \in \Theta_h \quad \text{and} \quad \diam_{\cA, \cS, h}(\Theta_h) \le \dfrac{1}{3\sqrt{2}} \quad \text{for all } h\in\llbracket H \rrbracket \,.
\end{equation*}
\end{restatable}
As $\cS, \cA$ and $B$ are known to the learner, $\kappa$ and $\rho$ can be computed by the learner (defined in Equations~\eqref{eq:definition kappa} and~\eqref{eq:definition rho}). Alternatively, the upper bound $\kappa\le U^2 \exp(4B)$ can also be employed \citep[Claim~1]{li2024provably}, which is the approach we take for our experiments in Appendix~\ref{appendix:experiments}.

\subsection{Learning Phase}
We introduce the core of our algorithm, which leverages the exploration routine (see Algorithm~\ref{algo:learning routine 2}). To select an action and address the exploration-exploitation trade-off, we use the Optimism in the Face of Uncertainty (OFU) paradigm. At each episode $t$ and each stage $h$, the learner selects an action according to the rule
\begin{equation*}
    a_{t,h} \in \argmax_{a\in\cA} \tilde Q_{t,h} (s_{t,h},a)
\end{equation*}
where $s_{t,h}$ is the current state and $ \tilde Q_{t,h} $ is an optimistic state-action value function that upper bounds the true value $ \smash{Q_{h}^{*} }$. A classical approach for defining $ \smash{\tilde Q_{t,h} }$ is to build a confidence set $\cC_{t,h}(\delta)$ at each stage $(t,h)$ containing $\theta_h^*$ and define recursively:
\begin{equation}
    \tilde Q_{t,h}(s,a) := r_h(s,a) + \max_{\theta\in\cC_{t,h}(\delta)} \sum_{s'\in\cS_{h,s,a}} p_{s,a}^{s'}(\theta) \tilde V_{t,h+1}(s')
    \quad \text{and} \quad 
    \tilde V_{t,h}(s) := \argmax_{a\in\cA} \tilde Q_{t,h}(s,a)
    \,. \label{eq:definition optimistic reward}
\end{equation}
This definition results in a non-concave maximisation problem, which is typically computationally challenging to solve. 
A common alternative to the joint maximisation is to replace the inner $\smash{\max_{\theta\in\cC_{t,h}(\delta)} }$ with an additive exploration bonus \cite{hwang2023model,li2024provably}. However, constructing such a bonus requires bounding $ \smash{\lVert \tilde V_{h+1} \rVert_{\infty} \le H }$ and thus discards the variance information that drives our problem-dependent constant $\bar \sigma_T$. We therefore retain the joint maximisation, which preserves variance-adaptivity at the cost of a non-concave optimisation step solved approximately in practice. In our simulations, we tackle this problem by employing the Frank-Wolfe method to efficiently converge to a local maximum, see Appendix~\ref{appendix:experiments}.

We first compute an estimate $\smash{\htheta_{t,h}}$ of $\smash{\theta_h^*}$ associated to the confidence set
\begin{equation*}
    \cC_{t,h}(\delta) = \{ \theta \in \R^d \; | \; \lVert \theta - \htheta_{t,h} \rVert_{\cH_{t,h}} \le \beta_t(\delta) \}
\end{equation*}
where $ \smash{\cH_{t,h} := \sum_{s=1}^{t-1} H_{s,h}(\htheta_{s+1,h}) + \lambda I_d }$. Our estimate $\htheta_{t,h} $ is computed via an Online Mirror Descent procedure following the second-order approximation of the logistic loss introduced by \cite{zhang2024online} in the bandit setting.
Formally, our estimate $\smash{\htheta_{t+1,h}}$ is the solution of the following optimisation problem
\begin{equation*}
    \htheta_{t+1,h} = \argmin_{\theta\in\Theta_h} \langle \nabla \ell_{t,h} (\htheta_{t,h}), \theta - \htheta_{t,h} \rangle + \tfrac{1}{2} \lVert \theta - \htheta_{t,h} \rVert_{\tilde \cH_{t,h}}^2 
\end{equation*}
where $ \tilde \cH_{t,h} := \cH_{t,h} + H_{t,h}(\htheta_{t,h}) $.
The following lemma from \cite{lee2025improved} shows that $\theta_h^* \in \cC_{t,h}(\delta)$ with high probability.
\begin{restatable}{lmm}[\citep[Theorem~4.2]{lee2025improved}]\label{lemma:confidence set learning phase}
Let $\delta\in]0,1]$. Set $ \lambda = 144 d $. Let us assume Lemma~\ref{lemma:constant diameter} holds. Define $\smash{\cC_{t,h}(\delta) := \{ \theta \in \R^d | \lVert \theta - \htheta_{t,h} \rVert_{\cH_{t,h}} \le \beta_t(\delta) \} }$
where $\beta_t(\delta) = (\tfrac{2}{3} d \log(t/\delta))^{1/2} + 24 B \sqrt{d} $. Then we have with probability $1-\delta$,
$\theta_h^* \in \cC_{t,h}(\delta)$ for all $t,h\ge 1$.
\end{restatable}

\begin{algorithm}[!ht]
\caption{\textit{LIVAROT: Logistic Inference with Variance-Aware Robust Optimistic Targets}}
\label{algo:learning routine 2}
\KwIn{Exploration length \(\tau\), regularisation parameters $\lambda_0$ and \(\lambda\)}
{\bfseries{Init:} \(\cH_{\tau+1,h} = \lambda I_d, \htheta_{\tau+1,h} = 0_d \quad \forall h \in \llbracket H \rrbracket \)} \\
Run $\Theta \gets$ \textsc{Exploration\_Routine}($\tau, \lambda_0$) \\
\For{each episode \(t\) in \(\tau+1 \dots T\)}{
    Compute $ (\tilde Q_{t,h}(.,.))_{h=1}^H $ in a backward way as in Equation~\eqref{eq:definition optimistic reward} \\
    \For{each stage $h$ in $1 \dots H$}{
        Observe state $s_{t,h}$, select action $\smash{ a_{t,h} \in \argmax_{a\in\cA} \tilde Q_{t,h} (s_{t,h}, a) }$ \\
        Update $ \smash{\tilde \cH_{t,h} = \cH_{t,h} +  H_{t,h}(\htheta_{t,h}) }$ \\
        Compute $ \htheta_{t+1,h} = \argmin_{\theta\in\Theta_h} \langle \nabla \ell_{t,h} (\htheta_{t,h}), \theta - \htheta_{t,h} \rangle + \tfrac{1}{2} \lVert \theta - \htheta_{t,h} \rVert_{\tilde \cH_{t,h}}^2 $ \\
        Update $ \smash{\cH_{t+1,h} = \cH_{t,h} + H_{t,h}(\htheta_{t+1,h}) }$
    }
}
\end{algorithm}

\subsection{Regret Analysis}

We now introduce our regret bound for \textit{LIVAROT}, see Algorithm~\ref{algo:learning routine 2}. The proof is deferred to Appendix~\ref{appendix:proof regret bound}.

\begin{restatable}{thm}{TheoremRegretBound}[Regret Bound]\label{theorem:regret bound}
Let $\delta \in ]0, 1]$. Set hyperparameters $\tau, \lambda_0$ as in Lemma~\ref{lemma:constant diameter} and $\lambda=144d$. Then, with probability at least $1-5\delta$, the regret of Algorithm~\ref{algo:learning routine 2} satisfies:
\begin{equation*}
    \Reg_T \le \tilde{O} \left( B d H^{2} \bar \sigma_T \sqrt{ T } \log \delta^{-1} \right) \,,
\end{equation*}
where $\tilde{O}(\cdot)$ hides logarithmic factors in $T$ and lower-order terms. 
\mainonly{We present the dominant term of the bound; the full version is provided in Appendix~\ref{appendix:proof regret bound}.}
\appendixonly{More precisely, the exact upper bound is given by:
\begin{align*}
     \Reg_T \le 
     & 4 \sqrt{2e} B d H^2 \bar\sigma_T \sqrt{T} \log(1+T) \sqrt{\log(T/\delta)} \\
     +& H^{3/2} \sqrt{2T\log(2/\delta)} \\
     +& 8 \sqrt{e \nu_{\max}} B d H^{5/4} T^{1/4} \log(1+T) \log(T/\delta) \sqrt{\log(2/\delta)} \\
     +&8e^2 B d H^{9/4} T^{1/4} \sqrt{\log(1+T)\log(T/\delta)\log(2/\delta)} \\
     +&32e B^2 \nu_{\max} d^2 H \log(1+T) \log(T/\delta) \\
     +& 32e^2 B^2 d^2 H^3 \log^2(1+T) \log(T/\delta) \\
     +& 40\kappa B^2 d^2 H^2 \log( 1 + T ) \log(T/\delta) \\
    +& 4^5 (1+3\sqrt{2})^4 \dfrac{\kappa^8}{\rho^8} d^6 H (B+3)^2 \log(TH/\delta) \log^6 T \,.
\end{align*}
}
\end{restatable}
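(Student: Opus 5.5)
We split the regret into the exploration phase and the learning phase, and work on the intersection of the high-probability events of Lemma~\ref{lemma:constant diameter}, Lemma~\ref{lemma:confidence set learning phase}, and two martingale concentration events; this intersection holds with probability $1-5\delta$. The first $\tau$ episodes cost at most $\tau H$. This gives the last term of the exact bound, which is $T$-independent up to logarithms.

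For $t>\tau$, Lemma~\ref{lemma:confidence set learning phase} gives $\theta_h^*\in\cC_{t,h}(\delta)$. Backward induction on Equation~\eqref{eq:definition optimistic reward} then yields optimism, $\tilde V_{t,h}\ge V_h^*$. Hence
\[
V_1^*(s_{t,1})-V_1^{\pi_t}(s_{t,1})\le \tilde V_{t,1}(s_{t,1})-V_1^{\pi_t}(s_{t,1}).
\]
Let $\tilde\theta_{t,h}$ be the maximiser in $\cC_{t,h}\cap\Theta_h$. Unrolling the recursion along the learner's trajectory gives
\[
\sumH \sum_{s'}\big(p^{s'}_{t,h}(\tilde\theta_{t,h})-p^{s'}_{t,h}(\theta_h^*)\big)\tilde V_{t,h+1}(s')
\]
plus a martingale difference sequence. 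That sequence has increments bounded by $H$ over $TH$ steps, and Azuma bounds it by $H^{3/2}\sqrt{2T\log(2/\delta)}$.

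For each summand, a first-order Taylor expansion of the softmax in $\theta$ writes the difference as
\[
\big\langle \nabla_\theta \E_{p(\theta)}[\tilde V_{t,h+1}],\ \tilde\theta_{t,h}-\theta_h^*\big\rangle
\]
plus a second-order remainder. The gradient is a covariance, $\mathrm{Cov}_{p}(\phi(s'|\cdot),\tilde V_{t,h+1}(s'))$. By Cauchy--Schwarz, the linear term is at most
\[
\sqrt{\Var_{p^*}[\tilde V_{t,h+1}]}\,\lVert \phi-\bar\phi\rVert_{\text{weighted},\cH_{t,h}^{-1}}\,\beta_t(\delta),
\]
with the weighted feature norm taken under $p^*$. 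This is a weighted norm of the Hessian $H_{t,h}(\theta^*_h)$ against $\cH_{t,h}^{-1}$.

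Since $\Theta_h$ has diameter at most $1/(3\sqrt2)$, self-concordance gives $H_{t,h}(\theta)\preccurlyeq eH_{t,h}(\theta')$ on $\Theta_h$. This lets us replace $H_{t,h}(\theta_h^*)$ by $H_{t,h}(\htheta_{t+1,h})$, which is exactly what accumulates in $\cH_{t,h}$. This is also the source of the $\sqrt e$ factors. The elliptical potential lemma then gives
\[
\sum_t \lVert\cdot\rVert^2_{\cH_{t,h}^{-1}}\lesssim d\log(1+T).
\]
Cauchy--Schwarz over $(t,h)$ gives
\[
\beta_T\sqrt{\textstyle\sum_{t,h}\Var_{p^*}[\tilde V_{t,h+1}]}\,\sqrt{Hd\log T}.
\]
The remainder of the Taylor expansion is second order in $\lVert\tilde\theta-\theta^*\rVert_{\cH}$ with an $H$ factor. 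Its sum is $O(Bd^2H^3\log^2T)$-type and gives the lower-order terms. The $\kappa$ term arises from the first steps, where $\cH_{t,h}$ is dominated by $\lambda I$.

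The main obstacle is converting $\sum\Var_{p^*}[\tilde V_{t,h+1}\mid s_{t,h},a_{t,h}]$ into $TH\cdot H^2\bar\sigma_T^2$. This requires changing three things: (i) the optimistic $\tilde V$ to $V^*$, (ii) the learner's action $a_{t,h}$ to $\pi^*_h(s_{t,h})$, and (iii) keeping $p^*$.

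For (i), use
\[
\sqrt{\Var[X]}\le\sqrt{\Var[Y]}+\sqrt{\E[(X-Y)^2]}\quad\text{and}\quad 0\le\tilde V-V^*\le \tilde V-V^{\pi_t}.
\]
Then bound $\sum \E[(\tilde V_{t,h+1}-V^*_{h+1})^2]\le H\sum\E[\tilde V_{t,h+1}-V^{\pi_t}_{h+1}]$, which is recursively of the order of the regret itself times $H$. This gives a self-bounding inequality of the form $R\le a+\sqrt{bR}$, producing the $T^{1/4}$ and $\nu_{\max}$ terms.

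For (ii), I would first try to exploit the fact that the Taylor expansion can be taken at the optimal action via the optimism gap. Since the learner selects $\arg\max\tilde Q$, we have $\tilde Q_{t,h}(s,a_{t,h})\ge\tilde Q_{t,h}(s,\pi^*_h(s))$. The decomposition can be arranged so that the variance term is evaluated at $(s_{t,h},\pi^*_h(s_{t,h}))$, with the discrepancy between the two actions again controlled by the regret through the same self-bounding argument. Solving the resulting quadratic inequality in $\sqrt{\Reg_T}$ yields the leading term $4\sqrt{2e}BdH^2\bar\sigma_T\sqrt T\log(1+T)\sqrt{\log(T/\delta)}$ plus the stated lower-order terms.
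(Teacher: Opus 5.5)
Your step (ii) is where the proof breaks down. The linear Taylor term you obtain contains $\Var_{p^*_h(\cdot|s_{t,h},a_{t,h})}[V^*_{h+1}]$ at the \emph{executed} action, and it cannot be rearranged into a term at $\pi^*_h(s_{t,h})$. The decomposition of $\tilde V_{t,1}(s_{t,1})-V^{\pi_t}_1(s_{t,1})$ has to follow the executed pairs $(s_{t,h},a_{t,h})$, otherwise the martingale differences are not defined. The optimism fact $\tilde Q_{t,h}(s,a_{t,h})\ge\tilde Q_{t,h}(s,\pi^*_h(s))$ also points the wrong way: it lower-bounds the learner's optimistic value, but it does not bound the estimation error at $a_{t,h}$ by one at $\pi^*_h(s)$. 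Nor is the discrepancy $\Var_{a_{t,h}}[V^*_{h+1}]-\Var_{\pi^*_h(s_{t,h})}[V^*_{h+1}]$ controlled by the regret in general. If $a_{t,h}$ is a different optimal action with larger next-state variance, the per-step regret $Q^*_h(s_{t,h},\pi^*_h(s_{t,h}))-Q^*_h(s_{t,h},a_{t,h})$ is zero while the discrepancy is positive. For a suboptimal action, the discrepancy can be of order $H^2$ while the $Q^*$-gap is arbitrarily small. So ``the same self-bounding argument'' has nothing to act on, and this conversion is the variance-aware core of the theorem.

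The paper supplies the two missing ingredients.
First, it defines the gap-normalised constant $\nu_h(s)=2\max_{a_1,a_2}\lvert\Var_{a_1}(V^*_{h+1})-\Var_{a_2}(V^*_{h+1})\rvert/\big(\max_a Q^*_h(s,a)-\max_{a\notin\cA^*_h(s)}Q^*_h(s,a)\big)$, with $\nu_{\max}=\max_{h,s}\nu_h(s)$.
Second, it fixes a particular optimal policy, which also pins down $\bar\sigma_T$: $\pi^*_h(s)\in\argmax_{a\in\cA^*_h(s)}\big(\nu_{\max}Q^*_h(s,a)+\Var_{p^*_h(\cdot|s,a)}[V^*_{h+1}]\big)$, i.e.\ the variance-maximising optimal action.
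Together these give, for every action $a$, $\Var_a[V^*_{h+1}]\le\Var_{\pi^*_h(s)}[V^*_{h+1}]+\nu_{\max}\big(Q^*_h(s,\pi^*_h(s))-Q^*_h(s,a)\big)$. The sum of these $Q^*$-gaps along the learner's trajectory equals $\Reg_T$ plus a martingale. That martingale needs a third Azuma event, which you also need to reach $1-5\delta$. This is what generates the $\nu_{\max}$ terms, not step (i).

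Your step (i), by contrast, is sound and takes a different route from the paper. You expand with $\tilde V_{t,h+1}$ and peel off $\tilde V-V^*$ by Minkowski, $\sqrt{\Var[\tilde V]}\le\sqrt{\Var[V^*]}+\sqrt{\E_{p^*}[(\tilde V-V^*)^2]}$. The paper instead splits $\tilde V=V^*+(\tilde V-V^*)$ before expanding, and handles the correction by a mean-value expansion of the softmax plus the ratio bound $\mu(w)_{s'}\le e^2\mu(\hat u_{t,h})_{s'}$. Both yield the same self-bounding term $\beta_T\sqrt{d}\,H^{3/2}\sqrt{\log(1+T)}\,\sqrt{\Reg_T(\MDP)+O(H^{3/2}\sqrt{T})}$, hence the $T^{1/4}$ and $H^3$ lower-order terms. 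Yours avoids the softmax-ratio step.
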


The regret bound, $ \smash{ dH^{2} \bar\sigma_T \sqrt{T} } $, highlights the most crucial aspect of our analysis.
It shows that the non-linearity of the problem positively influences the regret bound in the long-term regime, contrasting with the previous results from the literature \citep{hwang2023model, li2024provably}.
Since $\smash{\bar\sigma_{T} \le 1/2} $, this term is never worse than its counterpart in prior work and becomes strictly smaller when the problem exhibits favourable non-linearity.
In the worst-case scenario ($\smash{\bar \sigma_{T} = 1/2}$ for all $t,h\ge 1$) we recover the regret bound $ \smash{ \tilde O( d H^2 \sqrt{T}) } $ of \cite{li2024provably}.
However in some cases $\smash{\bar\sigma_{T}}$ can be as low as $\smash{H^{-1}}$, see Proposition~\ref{prop:regret bound KLCR MDP}, leading to a regret bound of $\smash{\tilde O(dH \sqrt{T})}$.
In Appendix~\ref{appendix:experiments}, we provide numerical experiments to validate empirically the $\smash{H^{3/2}}$ rate using the challenging MDP family introduced by \cite{park2024infinite} to establish their $\smash{\Omega(dH^{3/2}\sqrt{T})}$ lower bound.

Together, these regimes illustrate how our variance-aware framework bridges the theoretical gap for MNL mixture MDPs. By successfully capturing the worst-case upper bound \citep{li2024provably}, the fundamental lower bound for hard instances \citep{park2024infinite}, and highly favourable non-linear environments, our approach provides a tight and unified characterisation of the problem's underlying complexity.

The following lower bound shows that for any horizon $H$, number of episodes $T$ and any parameter dimension $d$ there exists an MDP instance for which the learner incurs a regret proportional to $\smash{\bar \sigma_T}$.
The proof is deferred to Appendix~\ref{appendix:proof lower bound}.

\begin{restatable}{thm}{TheoremLowerBound}\label{theorem:lower bound}
Suppose that $d\ge 2, H\ge 3, T \ge \{ (d-1)^2 H/2, (d-1)^2 H^3 / 32 \} $. Then for any algorithm, there exists an MNL mixture MDP such that 
\begin{equation*}
    \E\left[\Reg_T \right] \ge \Omega \left( B d H^{2} \bar \sigma_T \sqrt{T} \right) \,.
\end{equation*}
\end{restatable}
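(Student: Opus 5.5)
We would prove Theorem~\ref{theorem:lower bound} by adapting the hard instance of \citet{park2024infinite}, which gives $\Omega(dH^{3/2}\sqrt{T})$, to the MNL parametrisation, and then computing $\bar\sigma_T$ on that instance. Since the abstract says the hard instance has $\bar\sigma_T=\Theta(H^{-1/2})$, the target $\Omega(BdH^2\bar\sigma_T\sqrt{T})$ is then exactly $\Omega(BdH^{3/2}\sqrt{T})$. The proof therefore has two parts: a lower bound on regret for a family of instances, and an upper bound $\bar\sigma_T\lesssim H^{-1/2}$ holding along \emph{every} learner trajectory.

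\textbf{Construction.} We use the standard two-state "good/bad" chain. There is an absorbing rewarding state $s_g$, where reward $1$ is collected at every remaining step, and a non-rewarding state $s_b$. From $s_b$ at stage $h$, action $a\in\{-1,1\}^{d-1}$ leads to $s_g$ with probability $p_h(a)$. Features are $\phi(s_g|s_b,a)=(\alpha, a/\sqrt{d-1}\cdot c)$ and $\phi(s_b|s_b,a)=0$, so that the logit is $\alpha\theta^{(1)}+c\langle a,\theta^{(2:d)}\rangle/\sqrt{d-1}$. We set $\theta_h^*=(\theta_0,\ \epsilon\, \xi_h)$ with $\xi_h\in\{\pm1\}^{d-1}$. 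The constant $\theta_0$ is chosen so that the baseline transition probability is $\asymp 1/H$, i.e. the expected sojourn in $s_b$ is a constant fraction of the horizon. The magnitudes are tuned so that $\lVert\theta_h^*\rVert\le B$ and so that the regret scale carries a factor $B$ (rescaling $c$ with $B$).

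\textbf{Regret step.} Choosing the wrong sign on coordinate $j$ at stage $h$ lowers the transition probability by $\asymp \epsilon/(H\sqrt{d-1})$ times a derivative factor. Each such loss costs $\asymp H$ in downstream value, because reaching $s_g$ earlier yields order-$H$ extra reward. Summing over coordinates and over the first $H/2$ stages gives per-episode regret $\gtrsim \epsilon\, H\cdot\#\text{mistakes}/\sqrt{d}$ in normalised form. We then apply Assouad's lemma coordinate-wise: flip one sign of $\xi_h$ and bound the KL divergence between the two induced trajectory laws. Because the visits to $s_b$ are Bernoulli with parameter $\asymp 1/H$, each visit contributes KL $\asymp \epsilon^2/(dH)$, and the number of visits to $s_b$ at stage $h$ over $T$ episodes is at most $T$. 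Taking $\epsilon\asymp \sqrt{dH/T}$ keeps the KL bounded, giving $\E[\Reg_T]\gtrsim dH^{3/2}\sqrt{T}$ up to the $B$ factor. The assumptions $T\ge (d-1)^2H/2$ and $T\ge(d-1)^2H^3/32$ are precisely what guarantee that $\epsilon$ is small enough for the probabilities to stay in $[1/(2H),\,2/H]$, that $\lVert\theta_h^*\rVert\le B$, and that the linearisation of the softmax is valid.

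\textbf{Computing $\bar\sigma_T$.} From $s_b$ under $\pi^*$, the next-state value is either $V^*_{h+1}(s_g)=H-h$ or $V^*_{h+1}(s_b)\approx c'(H-h)$. With transition probability $\asymp 1/H$, the variance is $\asymp (H-h)^2\cdot(1/H)\lesssim H$. From $s_g$ the transition is deterministic, so the variance is $0$. Hence every summand of $\bar\sigma_T^2$ is at most $H/H^2 = 1/H$, uniformly over trajectories. This gives $\bar\sigma_T\le H^{-1/2}$ almost surely, so $dH^2\bar\sigma_T\sqrt{T}\le dH^{3/2}\sqrt{T}\lesssim \E[\Reg_T]$, which is the claimed inequality.

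\textbf{Main obstacle.} The step we expect to be hardest is making the KL bound hold for an arbitrary adaptive learner while also keeping the per-coordinate regret decomposition exact. The difficulty is that the value of being in $s_b$ depends on the future policy. We would handle this with the performance-difference lemma evaluated at $\pi^*$, which expresses regret as a sum of per-stage advantage gaps weighted by the learner's visitation probability of $s_b$. We would lower-bound that visitation probability by a constant for $h\le H/2$, which follows because the escape probability is at most $2/H$ per step.
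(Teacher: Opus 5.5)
Your plan is correct in its main line and uses the same strategy as the paper. You take the Park et al.\ hard chain and show that every summand of $\bar\sigma_T^2$ is $O(1/H)$ for every action, so $\bar\sigma_T\lesssim H^{-1/2}$ almost surely whatever the learner does. You then combine this with an $\Omega(dH^{3/2}\sqrt T)$ regret bound.

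The difference is in how much you prove yourself. The paper keeps Park et al.'s $(H+2)$-state chain unchanged and invokes their Theorem~3 for the regret. Its only new work is the variance bound: it uses the exact formula $\Delta V^*_{h+1}=(1-(1-p^*)^{H-h})/p^*$ from the modified geometric-sum lemma, then Bernoulli's inequality, to get $\frac{1}{H^2}\Var\le p^*(1-p^*)\le 2/H$.

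You instead build a two-state inhomogeneous chain and re-derive the $dH^{3/2}\sqrt T$ bound yourself. Your ingredients are Assouad with per-stage sign vectors, the performance-difference lemma, a constant lower bound on the probability of still being in $s_b$ for $h\le H/2$, and a per-visit KL of order $c^2\epsilon^2/(dH)$. That sketch is sound: the condition $T\gtrsim d^2H$ keeps the total logit shift $O(1)$, so the sigmoid linearisation holds. It makes the origin of $H^{3/2}$ explicit, at the price of the adaptive-KL bookkeeping that the paper outsources.

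Your variance step is actually simpler than the paper's. The trivial bound $0\le\Delta V^*_{h+1}\le H-h$ already gives $\frac{1}{H^2}p^*(1-p^*)(\Delta V^*_{h+1})^2\le p^*(1-p^*)$, which is exactly where the paper lands after its closed-form computation.

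One claim does not work as written: you cannot make the regret ``carry a factor $B$'' by rescaling $c$.
\begin{itemize}
\item The feature constraint forces $\alpha^2+c^2\le 1$.
\item Both the per-visit KL and the per-mistake regret depend on $(c,\epsilon)$ only through the logit gap $c\epsilon/\sqrt{d-1}$. Once the KL budget is fixed, the family yields $\Theta(dH^{3/2}\sqrt T)$ however you split $c$ and $\epsilon$.
\item $B$ is only an upper bound on $\lVert\theta_h^*\rVert_2$, so enlarging it changes neither the instance nor the regret.
\end{itemize}
What is forced is $B\gtrsim\log(H-1)$. A baseline escape probability $\asymp 1/H$ with a single competing next state needs $|\alpha\theta_0|\approx\log(H-1)$, and $|\alpha|\le 1$.

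So your argument honestly gives $\E[\Reg_T]\gtrsim dH^{3/2}\sqrt T\gtrsim dH^2\bar\sigma_T\sqrt T$, with the factor $B=\Theta(\log H)$ recovered only up to that logarithm. The paper is in the same position. It uses $B\le\frac32(1+\log(H-1))$ from Park et al., and its step $\Omega(dH^{3/2}\sqrt T)\ge\Omega(BdH^{3/2}\sqrt T)$ silently drops this $\log H$. You should state the $B$-dependence that way rather than as a tunable factor.
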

In our proof the bound on $\bar \sigma_{T}$ is computed independently of the trajectory $(s_{t,h})_{t,h}$, and therefore of the learner's algorithm.
This lower bound demonstrates that the preceding regret upper bound is minimax optimal up to a $\log$ term and that our choice of the problem-dependent constant $\bar\sigma_{T}$ is optimal.
To prove our lower bound, we consider a slightly modified MDP instance introduced by \cite{park2024infinite} and compute the resulting non-linearity constants.

\paragraph{KL-constrained robust MDPs.}
Robust MDPs were introduced by \citet{nilim2005robust} and \citet{iyengar2005robust} 
to address the sensitivity of dynamic programming solutions to misspecification of the 
transition kernel. Rather than committing to a single model, the learner posits an 
\emph{uncertainty set} $\cP_h(s,a)$ around a nominal kernel $p_{0,h}(\cdot|s,a)$ and 
optimises the worst-case value
\begin{equation*}
    \min_{p \in \cP_h(s,a)} \E_{s' \sim p}\bigl[ V_{h+1}^*(s') \bigr] \,,
\end{equation*}
which, under the standard $(s,a)$-rectangularity assumption \citep{iyengar2005robust}, 
yields a tractable robust Bellman operator. Among the divergences used to define 
$\cP_h(s,a)$ (total variation, $\chi^2$, Wasserstein, etc.) the Kullback-Leibler 
ball $\cP_h(s,a) = \{ p : \mathrm{KL}(p \| p_{0,h}(\cdot|s,a)) \le \epsilon \}$ has 
received particular attention \citep{iyengar2005robust, filippi2010optimism, 
panaganti2022sample} because Lagrangian duality yields a closed-form worst-case kernel
\begin{equation*}
    p^*_h(s' | s, a) 
    \propto p_{0,h}(s'|s, a) \exp\!\bigl(-\eta V_{h+1}^*(s')\bigr) \,,
\end{equation*}
where $\eta>0$ is the dual variable associated with the KL radius $\epsilon$.

We now show that KL-constrained robust MDPs fit within our MNL framework. Suppose 
there exist $d \geq 1$, a known feature map $\psi:\cS\to\R^{d-1}$, and unknown 
parameters $\alpha_{h+1} \in \R^{d-1}$ such that 
$V_{h+1}^*(s') = \alpha_{h+1}^\top \psi(s')$ for all $s' \in \cS$ and $h \in [H-1]$. 
This representation is always achievable by setting $d = |\cS|+1$, 
$\psi(s') = e_{s'}$ (the canonical basis vector indexed by $s'$), and 
$\alpha_{h+1} = (V_{h+1}^*(s'))_{s'\in\cS}$, though in practice a lower-dimensional 
$\psi$ may be available. Define
\begin{equation*}
    \phi(s'|s,a) := 
    \begin{pmatrix} \log p_{0,h}(s'|s,a) \\ \psi(s') \end{pmatrix} \in \R^{d},
    \qquad
    \theta_h^* := 
    \begin{pmatrix} 1 \\ -\eta \alpha_{h+1} \end{pmatrix} \in \R^{d} \,.
\end{equation*}
The induced MNL transition 
$p_h^*(s'|s,a) \propto \exp(\phi(s'|s,a)^\top\theta_h^*)$ then recovers the 
KL-constrained worst-case kernel 
$p_h^*(s'|s,a)\propto p_{0,h}(s'|s,a)\exp(-\eta V_{h+1}^{*}(s'))$, as required.

Under some weak additional assumptions on the feature map, the upper bound $B$ on the norms of $\|\theta_{h+1}\|$ can be controlled as follows. Assuming 
$\smash{\lVert \log p_{0,h}(\cdot|s,a) \rVert_\infty \leq 1}$ and 
$\smash{\lVert \psi(s') \rVert_2 \leq 1}$ yields $\smash{\lVert \phi(s'|s,a) \rVert_2 \leq \sqrt{2}}$; 
the factor $\sqrt{2}$ can be absorbed into $\theta_h^*$ by rescaling without loss of 
generality. To control $\lVert \theta_h^* \rVert_2$, we further assume the features are 
orthonormal under some reference distribution $\mu$, i.e.\ 
$\smash{\E_{\mu}[\psi(s')\psi(s')^\top] = I_{d-1}}$. Under this assumption 
$\smash{\lVert \alpha_{h+1} \rVert_2 = \lVert \alpha_{h+1} \rVert_{L^2(\mu)} 
\leq \lVert V_{h+1}^* \rVert_\infty \leq H}$, so that
\begin{equation*}
    B := \max_h \lVert \theta_h^* \rVert_2 
    = \max_h \sqrt{1+\eta^2\lVert\alpha_{h+1}\rVert_2^2} 
    \leq \sqrt{1 + \eta^2 H^2} \,.
\end{equation*}
We now upper bound the problem-dependent constant $\bar\sigma_{T}$ and the regret for 
KL-constrained robust MDPs. The proof is deferred to 
Appendix~\ref{appendix:KLCR MDP}.

\begin{restatable}{prop}{PropositionRegretKLCRMDP}
\label{prop:regret bound KLCR MDP}
Let $\cM$ be a KL-constrained robust MDP as defined above. Then the problem-dependent 
constant $\bar\sigma_{T}$ satisfies
\begin{equation*}
    \bar\sigma_{T}^2 
    \le \min\!\left( \frac{1}{4},\, 
    \frac{\log^2 U + 2 \log U + 2}{H^2 \eta^2} \right) \,.
\end{equation*}
Let $\delta \in (0,1]$ and set the hyperparameters $\tau, \lambda_0$ and $\lambda$ as 
in Theorem~\ref{theorem:regret bound}. Then, with probability at least $1-5\delta$, 
the regret of Algorithm~\ref{algo:learning routine 2} satisfies
\begin{equation*}
    \Reg_T \le \tilde{O}\!\left( 
    dH^2 \min\!\left\{\eta H,\, \log U\right\} \sqrt{T}\, \log \delta^{-1} 
    \right) \,.
\end{equation*}
\end{restatable}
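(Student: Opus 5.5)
The bound on $\bar\sigma_T^2$ reduces to a single-step variance estimate; the regret bound then follows by substituting it into Theorem~\ref{theorem:regret bound}. The bound $1/4$ is immediate: $V^*_{h+1}\in[0,H]$, so each variance is at most $H^2/4$, and the normalised average is at most $1/4$. For the second term it suffices to show that, for every $(h,s,a)$,
\[
\Var_{p^*_h(\cdot|s,a)}\bigl[V^*_{h+1}(s')\bigr]\le \frac{\log^2 U+2\log U+2}{\eta^2}.
\]
Averaging over $(t,h)$ and dividing by $H^2$ then gives the claim. This holds in particular at $a=\pi^*_h(s_{t,h})$ along the learner's trajectory.

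\textbf{Single-step estimate.} Fix $(h,s,a)$ and write $p_0=p_{0,h}(\cdot|s,a)$, supported on at most $U$ states, and $V=V^*_{h+1}\in[0,H]$. The worst-case kernel is the exponential tilt $p^*(s')=p_0(s')e^{-\eta V(s')}/Z$. Shift $V$ so that its minimum over the support is $0$; this changes neither the variance nor $p^*$. Set $X=\eta V\ge 0$. It suffices to bound $\E_{p^*}[X^2]\le \log^2U+2\log U+2$, since the variance is at most the second moment.

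Let $m=\min_{s'}V(s')$ be attained at $s_0$. Then $Z\ge p_0(s_0)$, so $p^*(s')\le \frac{p_0(s')}{p_0(s_0)}e^{-X(s')}$. This alone fails, because the ratio $p_0(s')/p_0(s_0)$ could be huge, and this is where care is needed. Instead use
\[
p^*(s')=\frac{p_0(s')e^{-X(s')}}{\sum_{s''}p_0(s'')e^{-X(s'')}}\le \frac{e^{-X(s')}}{\sum_{s''}\frac{p_0(s'')}{p_0(s')}e^{-X(s'')}},
\]
which also does not immediately give a $U$-dependence.

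The cleaner route is to work with the centring at the log-partition. The tail bound $\Pr_{p^*}(X\ge \E_{p^*}X+u)$ is controlled by a change of measure: for any threshold $c$,
\[
\Pr_{p^*}(X\ge c)=\frac{\sum_{X\ge c}p_0e^{-X}}{\sum p_0 e^{-X}}.
\]
Equivalently, one can use the Gibbs variational characterisation: $p^*$ minimises $\E_p[X]+\mathrm{KL}(p\|p_0)$, and $\mathrm{KL}$-type arguments relative to the uniform distribution yield entropy terms bounded by $\log U$.

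\textbf{Main obstacle.} The expected main difficulty is obtaining a bound depending only on $U$, uniformly over arbitrary $p_0$. Note first that the assumption $\lVert\log p_0\rVert_\infty\le 1$ imposed earlier gives $p_0(s')\in[e^{-1},1]$, so all ratios $p_0(s')/p_0(s'')$ lie in $[e^{-1},e]$. Using this,
\[
p^*(s')\le e\,\frac{e^{-X(s')}}{\sum_{s''}e^{-X(s'')}}\le e\cdot e^{-X(s')},
\]
since the denominator is at least $e^{-0}=1$ at the minimiser. Then
\[
\E_{p^*}[X^2]\le \ell^2+\sum_{X(s')>\ell}X(s')^2 e\,e^{-X(s')}\le \ell^2+eU\,(\ell^2+2\ell+2)e^{-\ell},
\]
using that $x^2e^{-x}$ is decreasing for $x>2$. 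Choosing $\ell=\log U$ (plus an absolute constant) gives $\E_{p^*}[X^2]\lesssim \log^2U+2\log U+2$. The exact stated constants should follow from handling the uniform weighting more finely, e.g.\ via the integral $\int_\ell^\infty x^2e^{-x}\,dx=(\ell^2+2\ell+2)e^{-\ell}$, which is precisely the polynomial appearing in the statement.

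\textbf{Regret bound.} Taking square roots gives $\bar\sigma_T\le \min\{1/2,\sqrt{\log^2U+2\log U+2}/(H\eta)\}\lesssim \min\{1,\log U/(H\eta)\}$. Plugging this into Theorem~\ref{theorem:regret bound} together with $B\le\sqrt{1+\eta^2H^2}\lesssim \max\{1,\eta H\}$ gives $B\bar\sigma_T\lesssim \min\{\eta H,\log U\}$ up to constants and logarithmic factors. This yields $\Reg_T\le\tilde O\bigl(dH^2\min\{\eta H,\log U\}\sqrt T\log\delta^{-1}\bigr)$ with probability $1-5\delta$.
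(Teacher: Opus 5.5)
The gap is the explicit constant in the first display. Your argument proves it only up to an absolute factor: with $\ell=\log U$, your termwise split gives about $(1+e)\log^2U$ to leading order. The closing remark that the exact polynomial ``should follow'' from $\int_\ell^\infty x^2e^{-x}\,dx$ is not an argument.

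The paper gets the polynomial from the layer-cake identity $\E[X^2]=\int_0^\infty 2x\Pr[X\ge x]\,dx$ with the tail $\Pr_{p^*}[X\ge x]\le\min\{1,Ue^{-x}\}$, split at $x=\log U$. The first piece is $\log^2U$ and the second is $2U(\log U+1)e^{-\log U}=2\log U+2$. That tail needs $p^*(s')\le e^{-X(s')}$ with no prefactor. The paper gets this by writing $Z=\sum_{s'}\exp(-\eta V^*_{h+1}(s'))\ge 1$, i.e.\ it drops the nominal weights. With your prefactor $e$, the same computation only gives $\log^2(eU)+2\log(eU)+2$.

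You can still close the gap within your own setup. Your bound $p^*(s')\le e^{1-X(s')}$ says $0\le X(s')\le 1-\log p^*(s')$, so
\[
\E_{p^*}[X^2]\le\sum_{s'}p^*(s')\bigl(1-\log p^*(s')\bigr)^2\le(1+\log U)^2\le\log^2U+2\log U+2 .
\]
The middle step is Jensen over the at most $U$ atoms, since $g(p)=p(1-\log p)^2$ is concave on $(0,1]$ ($g''(p)=2\log p/p\le 0$). The same argument covers the paper's uniform-nominal case, where $p^*(s')\le e^{-X(s')}$.

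You are right that arbitrary $p_0$ is the real obstacle. No single-step bound in terms of $U$ alone holds uniformly in $p_0$: take two atoms, $p_0=(\epsilon,1-\epsilon)$ and $\eta V=(0,\log\tfrac{1-\epsilon}{\epsilon})$. Then $p^*$ is uniform and $\Var[X]=\tfrac14\log^2\tfrac{1-\epsilon}{\epsilon}$, which is unbounded as $\epsilon\to 0$.

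Your way around this is degenerate, though. A normalised $p_0$ with all entries at least $e^{-1}$ has at most two atoms, so $\lVert\log p_0\rVert_\infty\le 1$ forces $U\le 2$. The regime where $\log U$ actually matters is the one the paper's proof implicitly treats, namely $p^*\propto e^{-\eta V^*_{h+1}}$.

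Your regret step is more careful than the paper's, whose proof drops $B$ and ends at $\widetilde O(dH\log U\sqrt T/\eta)$. However, $\max\{1,\eta H\}\cdot\min\{1,\log U/(\eta H)\}\lesssim\min\{\eta H,\log U\}$ holds only when $\eta H\gtrsim 1$. For $\eta H<1$, you have $B\ge 1$ (the first coordinate of $\theta^*_h$ is $1$) and the variance bound exceeds $1/4$. So there Theorem~\ref{theorem:regret bound} only gives $\widetilde O(dH^2\sqrt T)$.
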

In contrast, the best existing result for KL-constrained robust MDPs with value 
function approximation \citep{li2024provably} yields a regret of order 
$\smash{\tilde{O}\big(\eta (\log U)\, dH^3 \sqrt{T} \log(1/\delta)\big)}$. Our bound 
improves upon this in two ways: it gains a factor of $H$ in the horizon dependence, 
and it saturates as $\eta \to \infty$ whereas the existing bound becomes vacuous in such regime.

\section{Conclusion}
In this paper, we resolve the theoretical gap in MNL mixture MDPs by showing that the non-linearity of the softmax can be exploited to yield tighter regret bounds.
We introduce $\bar \sigma_T$, a problem-dependent constant capturing the variance of the optimal downstream returns and provide a minimax-optimal algorithm that adapts to the environment's structure, proving that the previous strictly worst-case bounds can be bypassed.
While our variance-aware framework provides a tight characterisation of episodic MNL MDPs, it also reveals several limitations that suggest promising directions for future research.
First, our definition of $\bar\sigma_T$ is evaluated along the learner's trajectory $\smash{(s_{t,h})_{t,h}}$. An important open question is whether this bound can be defined solely based on the environment, for example by evaluating the variance exclusively over the state distribution induced by the optimal policy, which would improve the dependence on $H$ (see Proposition~\ref{prop:bound bar sigma optimal trajectory}). 
Second, to leverage the self-concordance property of the logistic loss we rely on an initial exploration phase. However, as suggested by our numerical experiments, this burn-in period appears unnecessary in practice. Designing a single-phase algorithm that bypasses the forced exploration remains an interesting challenge.
Moreover, our framework currently assumes a stationary environment where the transition parameters $\theta_h^*$ remain fixed across episodes. An interesting direction for future work is to accommodate non-stationary parameters $\theta^*_{t,h}$, possibly adversarial, which would require extending our analysis to track drifting transition dynamics and bound the dynamic regret.
Another direction would be to replace the non-concave maximisation in \eqref{eq:definition optimistic reward} by a Thompson sampling scheme, which would sidestep the inner optimization, or to design a suitable additive exploration bonus that retains the dependence on $\bar \sigma_T$.
Finally, this work focuses exclusively on the finite-horizon episodic setting. Extending it to infinite-horizon MDPs introduces new challenges, particularly regarding how the non-linearity of the transition model influences the long-term reachability of states and the learner's ability to recover from sub-optimal states.

\paragraph{Acknowledgements.}
A.R. acknowledges the support of the French government under management of Agence Nationale de la Recherche as part of the “Investissements d’avenir” program, reference ANR-19-P3IA-0001 (PRAIRIE 3IA Institute) and the support of the European Research Council (grant REAL 947908).
This work was supported by funding from the French government, managed by the National Research Agency (ANR), under the France 2030 program, reference ANR-23-IACL-0006.

\newpage
\bibliography{bibfile}

\newpage
\appendix
\inappendixtrue
\begin{center}
    \huge
    APPENDIX
\end{center}

This appendix is organised as follows:
\begin{itemize}[nosep, leftmargin=*]
    \item[-] Appendix~\ref{appendix:notations}: Notations
    \item[-] Appendix~\ref{appendix:analysis algorithm}: Analysis of Algorithm~\ref{algo:learning routine 2}
    \item[-] Appendix~\ref{appendix:proof lower bound}: Proof of Theorem~\ref{theorem:lower bound} -- Lower bound
    \item[-] Appendix~\ref{appendix:KLCR MDP}: KL-Constrained Robust MDPs
    \item[-] Appendix~\ref{appendix:bound bar sigma}: Bound on $\bar \sigma_T$ under an optimal policy $\pi^*$
    \item[-] Appendix~\ref{appendix:experiments}: Experiments
    \item[-] Appendix~\ref{appendix:auxiliary results}: Auxiliary Result
\end{itemize}

\section{Notations}\label{appendix:notations}

We detail below useful notations used throughout the appendix.
\begin{itemize}[leftmargin=1cm]
    \setlength\itemsep{0.3cm}
    \item[-] $\llbracket T \rrbracket := \{1, 2, \dots, T \} \quad, \forall T \in \N^*$
    \item[-] Diameter of the set $\Theta$:
    \begin{equation*}
        \diam_{\cA, \cS, h}(\Theta) := \max_{a\in\cA} \max_{s\in\cS} \max_{s'\in\cS_{h,s,a}} \max_{\theta_1, \theta_2 \in \Theta}  | (\theta_1 - \theta_2)^\top \phi(s'|s,a) |
    \end{equation*}
    \item[-] Reachable states at stage $(t,h)$: $\cS_{t,h} := \cS_{h,s_{t,h}, a_{t,h}} $
    \item[-] Logistic loss at stage $(t,h)$, for all $\theta\in\R^d$:
    \begin{equation*}
        \ell_{t,h}(\theta) := \sum_{s'\in\cS_{t,h}} -\mathbf{1}[s'=s_{t,h+1}] \log p_{s_{t,h},a_{t,h}}^{s'} (\theta)
    \end{equation*}
    \item[-] Hessian of the logistic loss \cite[Appendix~A]{li2024provably}:
    \begin{equation*}
    H_{t,h}(\theta) := \sum_{s'\in\cS_{t,h}} p_{s_{t,h},a_{t,h}}^{s'}(\theta) \phi_{s_{t,h},a_{t,h}}^{s'} (\phi_{s_{t,h},a_{t,h}}^{s'})^\top 
    - \sum_{s''\in\cS_{t,h}} p_{s_{t,h},a_{t,h}}^{s'}(\theta) p_{s,a}^{s''}(\theta) \phi_{s_{t,h},a_{t,h}}^{s'}(\phi_{s_{t,h},a_{t,h}}^{s''})^\top \,.
\end{equation*}
\end{itemize}

\section{Analysis of Algorithm~\ref{algo:learning routine 2}}\label{appendix:analysis algorithm}

\subsection{Exploration Routine}

Let us first introduce some useful notations for the analysis.
We define the gradient of $\mathcal{\tilde L}_{t,h}(\theta) $ by
\begin{equation*}
    \cG_{t,h}(\theta) := \sum_{i=1}^{t-1} \sum_{s'\in\cS_{i,h}} (p_{i,h}^{s'}(\theta) - y_{i,h}^{s'}) \phi_{i,h}^{s'} + (\lambda_0 + 1) \theta + \sum_{i=1}^{t-1} \lVert \phi(\tilde s'_{t,h} | \tilde s_{t,h} , a_{t,h}) \rVert_{\tilde U_{t+1,h}^{-1}} (p_{\tilde s_{i,h}, a_{i,h}}^{\tilde s'_{i,h}} (\theta) -1) \phi(\tilde s'_{i,h} | \tilde s_{i,h} , a_{i,h})
\end{equation*}
and its Hessian by
\begin{equation*}
    W_{t,h}(\theta) := \cH_{t,h}^0(\theta) + U_{t,h}(\theta)
\end{equation*}
where
\begin{equation*}
    \cH_{t,h}^0(\theta) := \sum_{i=1}^{t-1} \sum_{s'\in\cS_{i,h}} p_{i,h}^{s'}(\theta) \phi_{i,h}^{s'} (\phi_{i,h}^{s'})^\top - \sum_{i=1}^{t-1} \sum_{s'\in\cS_{i,h}} \sum_{s''\in\cS_{i,h}} p_{i,h}^{s'}(\theta) p_{i,h}^{s''}(\theta) \phi_{i,h}^{s'} (\phi_{i,h}^{s''})^\top + \lambda_0 I_d
\end{equation*}
and 
\begin{equation*}
    U_{t,h}(\theta) := \sum_{i=1}^{t-1} \lVert \phi(\tilde s'_{i,h} | \tilde s_{i,h} , a_{i,h}) \rVert_{\tilde U_{i+1,h}^{-1}} p_{\tilde s_{i,h}, a_{i,h} }^{\tilde s'_{i,h} }(\theta) (1-p_{\tilde s_{i,h}, a_{i,h} }^{\tilde s'_{i,h} }(\theta)) \phi(\tilde s'_{i,h} | \tilde s_{i,h}, a_{i,h} ) \phi(\tilde s'_{i,h} | \tilde s_{i,h}, a_{i,h} )^\top + I_d \,.
\end{equation*}

\subsubsection{Confidence Set}

\begin{restatable}{lmm}{LemmaExplorationConfidenceSet}\label{lemma:confidence set exploration}
For any $\delta\in(0,1]$, set $\lambda_0=\sqrt{d\log(\tau H/\delta)}$, for any $t\in\llbracket T \rrbracket$ we have with probability $1-\delta$ that for all $ h\in\llbracket H \rrbracket $:
\begin{equation*}
    \lVert \theta_h^* -\hat \theta_{t,h} \rVert_{A_{t,h}} \le (1+3\sqrt{2}) \left[(B+3) \sqrt{d \log( t H/\delta) } + (1 + t)^{1/4} \left(\dfrac{\kappa}{\rho}\right)^{3/2} d\log\left( 1 + \dfrac{t}{d} \right) \right] =:\beta^0_t(\delta) \,.
\end{equation*}
\end{restatable}

\begin{proof}
\textbf{Step 1: Intermediate Confidence Set.}

Since $ \hat \theta_{\tau+1,h} $ minimises $\mathcal{\tilde L}_{t,h}(\theta) $ we have $ \cG_{\tau+1, h}(\hat \theta_{\tau+1,h}) = 0_d $.
Using the triangle inequality we get:
\begin{align*}
    &\left\lVert \cG_{\tau+1, h}(\theta_h^*) - \cG_{\tau+1, h}(\hat \theta_{\tau+1,h}) \right\rVert_{W_{\tau+1, h}^{-1}(\theta_h^*)} \\
    &\le \left\lVert \sum_{t=1}^\tau \sum_{s'\in\cS_{t,h}} (p_{t,h}^{s'}(\theta_h^*) - y_{t,h}^{s'}) \phi_{t,h}^{s'} \right\rVert_{W_{\tau+1, h}^{-1}(\theta_h^*)} + \sqrt{\lambda_0} B \\
    &\qquad+ \left\lVert \sum_{t=1}^{\tau} \lVert \phi(\tilde s'_{t,h} | \tilde s_{t,h} , a_{t,h}) \rVert_{\tilde U_{t+1,h}^{-1}} \left(p_{\tilde s_{t,h}, a_{t,h}}^{\tilde s'_{t,h}} (\theta) -1 \right) \phi(\tilde s'_{t,h} | \tilde s_{t,h} , a_{t,h}) \right\rVert_{W_{\tau+1, h}^{-1}(\theta_h^*)} \\
    &\le \left\lVert \sum_{t=1}^\tau \sum_{s'\in\cS_{t,h}} (p_{t,h}^{s'}(\theta_h^*) - y_{t,h}^{s'}) \phi_{t,h}^{s'} \right\rVert_{ (\cH_{\tau+1, h}^0(\theta_h^*))^{-1}} + \sqrt{\lambda_0} B \\
    &\qquad+ \left\lVert \sum_{t=1}^{\tau} \lVert \phi(\tilde s'_{t,h} | \tilde s_{t,h} , a_{t,h}) \rVert_{\tilde U_{t+1,h}^{-1}} \left(p_{\tilde s_{t,h}, a_{t,h}}^{\tilde s'_{t,h}} (\theta) -1 \right) \phi(\tilde s'_{t,h} | \tilde s_{t,h} , a_{t,h}) \right\rVert_{U_{\tau+1, h}^{-1}(\theta_h^*)} \,.
\end{align*}
Using a Bernstein-type concentration inequality \citep[Theorem~4]{perivier2022dynamic}, \citet[Lemma~1]{li2024provably} bounds the first two terms and yields with probability $1-\delta/H$:
\begin{equation*}
    \left\lVert \sum_{t=1}^\tau \sum_{s'\in\cS_{t,h}} (p_{t,h}^{s'}(\theta_h^*) - y_{t,h}^{s'}) \phi_{t,h}^{s'} \right\rVert_{ (\cH_{\tau+1, h}^0(\theta_h^*))^{-1} } + \sqrt{\lambda_0} B \le (B+3) \sqrt{d \log( \tau H/\delta) } \,.
\end{equation*}
We bound the last term using a Trace-Determinant argument. We start by applying the triangle inequality.
\begin{align*}
    &\left\lVert \sum_{t=1}^{\tau} \lVert \phi(\tilde s'_{t,h} | \tilde s_{t,h} , a_{t,h}) \rVert_{\tilde U_{t+1,h}^{-1}} \left(p_{\tilde s_{t,h}, a_{t,h}}^{\tilde s'_{t,h}} (\theta_h^*) -1 \right) \phi(\tilde s'_{t,h} | \tilde s_{t,h} , a_{t,h}) \right\rVert_{U_{\tau+1, h}^{-1}(\theta_h^*)} \\
    &\le \sum_{t=1}^{\tau} \left\lVert \lVert \phi(\tilde s'_{t,h} | \tilde s_{t,h} , a_{t,h}) \rVert_{\tilde U_{t+1,h}^{-1}} \left(p_{\tilde s_{t,h}, a_{t,h}}^{\tilde s'_{t,h}} (\theta_h^*) -1 \right) \phi(\tilde s'_{t,h} | \tilde s_{t,h} , a_{t,h}) \right\rVert_{U_{\tau+1, h}^{-1}(\theta_h^*)} \\
    &\le \sum_{t=1}^{\tau} \lVert \phi(\tilde s'_{t,h} | \tilde s_{t,h} , a_{t,h}) \rVert_{\tilde U_{t+1,h}^{-1}} \left| p_{\tilde s_{t,h}, a_{t,h}}^{\tilde s'_{t,h}} (\theta_h^*) -1 \right| \left\lVert \phi(\tilde s'_{t,h} | \tilde s_{t,h} , a_{t,h}) \right\rVert_{U_{\tau+1, h}^{-1}(\theta_h^*)} \\
    &\le \sum_{t=1}^{\tau} \lVert \phi(\tilde s'_{t,h} | \tilde s_{t,h} , a_{t,h}) \rVert_{\tilde U_{t+1,h}^{-1}} \left\lVert \phi(\tilde s'_{t,h} | \tilde s_{t,h} , a_{t,h}) \right\rVert_{U_{\tau+1, h}^{-1}(\theta_h^*)} && \left| p_{\tilde s_{t,h}, a_{t,h}}^{\tilde s'_{t,h}} (\theta_h^*) -1 \right| \le 1 \\
    &\le \sum_{t=1}^{\tau} \lVert \phi(\tilde s'_{t,h} | \tilde s_{t,h} , a_{t,h}) \rVert_{\tilde U_{t+1,h}^{-1}} \left\lVert \phi(\tilde s'_{t,h} | \tilde s_{t,h} , a_{t,h}) \right\rVert_{U_{t+1, h}^{-1}(\theta_h^*)} &&(U_{t+1, h} \preccurlyeq U_{\tau+1, h})
\end{align*}
Using that $ \tilde U_{t+1,h} \preccurlyeq (1+t) I_d $ we have that $ \lVert \phi(\tilde s'_{t,h} | \tilde s_{t,h} , a_{t,h}) \rVert_{\tilde U_{t+1,h}^{-1}} \ge \tfrac{1}{\sqrt{1 +  t}} \lVert \phi(\tilde s'_{t,h} | \tilde s_{t,h} , a_{t,h}) \rVert_2 \ge \tfrac{\rho}{\sqrt{1 +  t}} $. Therefore
\begin{align*}
    U_{t+1,h}(\theta_h^*) &\succcurlyeq \dfrac{\rho}{\sqrt{1 +  t}} \sum_{i=1}^{t-1} p_{\tilde s_{i,h}, a_{i,h} }^{\tilde s'_{i,h} }(\theta) (1-p_{\tilde s_{i,h}, a_{i,h} }^{\tilde s'_{i,h} }(\theta)) \phi(\tilde s'_{i,h} | \tilde s_{i,h}, a_{i,h} ) \phi(\tilde s'_{i,h} | \tilde s_{i,h}, a_{i,h} )^\top +  I_d \\
    &\succcurlyeq \dfrac{\rho }{\kappa \sqrt{1 +  t}} \tilde U_{t+1,h} \,.
\end{align*}
Hence 
\begin{align*}
    &\left\lVert \sum_{t=1}^{\tau} \lVert \phi(\tilde s'_{t,h} | \tilde s_{t,h} , a_{t,h}) \rVert_{\tilde U_{t+1,h}^{-1}} \left(p_{\tilde s_{t,h}, a_{t,h}}^{\tilde s'_{t,h}} (\theta) -1 \right) \phi(\tilde s'_{t,h} | \tilde s_{t,h} , a_{t,h}) \right\rVert_{U_{\tau+1, h}^{-1}(\theta_h^*)} \\
    &\qquad\le \sum_{t=1}^{\tau} \dfrac{(1 + t)^{1/4} \sqrt{\kappa}}{\sqrt{\rho}} \lVert \phi(\tilde s'_{t,h} | \tilde s_{t,h} , a_{t,h}) \rVert_{\tilde U_{t+1,h}^{-1}}^2 \\
    &\qquad\le \dfrac{(1 + \tau)^{1/4} \sqrt{\kappa}}{\sqrt{\rho}} \sum_{t=1}^{\tau} \lVert \phi(\tilde s'_{t,h} | \tilde s_{t,h} , a_{t,h}) \rVert_{\tilde U_{t+1,h}^{-1}}^2 \\
    &\qquad\le (1 + \tau)^{1/4} \left(\dfrac{\kappa}{\rho}\right)^{3/2} d\log\left( 1 + \dfrac{\tau}{d} \right)
\end{align*}
where the last inequality is due to \cite[Lemma~10]{abbasi2011improved}.
By applying a union bound over $h\in\llbracket H \rrbracket$ we obtain that with probability $1-\delta$ we have for all $h\in\llbracket H \rrbracket$:
\begin{equation*}
    \left\lVert \cG_{\tau+1, h}(\theta_h^*) - \cG_{\tau+1, h}(\hat \theta_{\tau+1,h}) \right\rVert_{W_{\tau+1, h}^{-1}(\theta_h^*)} \le (B+3) \sqrt{d \log( \tau H/\delta) } + (1 + \tau)^{1/4} \left(\dfrac{\kappa}{\rho}\right)^{3/2} d\log\left( 1 + \dfrac{\tau}{d} \right) \,.
\end{equation*}

\textbf{Step 2: Convex Relaxation with Self-Concordance.}

As pointed out by \citet[Section~6]{abeille2021instance}, the set $ \left\{ \theta\in\R^d : \left\lVert \cG_{\tau+1, h}(\theta) - \cG_{\tau+1, h}(\hat \theta_{\tau+1,h}) \right\rVert_{W_{\tau+1, h}^{-1}(\theta)} \le \beta^0 \right\} $ is not convex. Thus we need to build a convex relaxation of it.
Let us define 
\begin{equation*}
    G_{t,h}(\theta_1, \theta_2) := \int_0^1 \nabla^2 \mathcal{\tilde L}_{t,h}(\theta_1 + v(\theta_2 - \theta_1)) dv \,.
\end{equation*}
Hence we have
\begin{equation*}
    \cG_{t,h}(\theta_1) - \cG_{t,h}(\theta_2) = \nabla \mathcal{\tilde L}_{t,h}(\theta_1) - \nabla \mathcal{\tilde L}_{t,h}(\theta_2) = G_{t,h}(\theta_1, \theta_2) (\theta_1 - \theta_2) \,.
\end{equation*}
\cite[Proposition~1]{li2024provably} show that the objective function $ \mathcal{\tilde L}_{\tau+1,h} $ is $3\sqrt{2}$-self-concordant. By leveraging the self-concordance property twice \citep[Corollary~2]{sun2019generalized} and using $ (1-\exp(u))/u \ge (1+u)^{-1} $ we get
\begin{align*}
    \lVert \theta_h^* - \hat \theta_{\tau+1,h} \rVert_{A_{\tau+1,h}} &\le \lVert \theta_h^* - \hat \theta_{\tau+1,h} \rVert_{W_{\tau+1,h}(\theta_h^*)} &&A_{\tau+1,h}\preccurlyeq W_{\tau+1,h}(\theta_h^*) \\
    &\le (1+3\sqrt{2})^{1/2} \lVert \theta_h^* - \hat \theta_{\tau+1,h} \rVert_{G_{\tau+1,h}(\theta_h^*, \hat \theta_{\tau+1,h})} \\
    &= (1+3\sqrt{2})^{1/2} \lVert \cG_{\tau+1,h}(\theta_h^*) - \cG_{\tau+1,h}(\hat \theta_{\tau+1,h}) \rVert_{G_{\tau+1,h}^{-1}(\theta_h^*, \hat \theta_{\tau+1,h})} \\
    &\le (1+3\sqrt{2}) \lVert \cG_{\tau+1,h}(\theta_h^*) - \cG_{\tau+1,h}(\hat \theta_{\tau+1,h}) \rVert_{W_{\tau+1,h}^{-1}(\theta_h^*)} \,.
\end{align*}

\end{proof}

\subsubsection{Proof of Lemma~\ref{lemma:constant diameter}}\label{appendix:proof lemma constant diameter}

\LemmaConstantDiameter*

\begin{proof}
Fix $h\in\llbracket H \rrbracket$. We start by applying the Cauchy-Schwarz inequality:
\begin{align*}
    \diam(\Theta_h) &:= \max_{a\in\cA} \max_{s\in\cS} \max_{s'\in\cS_{h,s,a}} \max_{\theta_1, \theta_2 \in \Theta_h} | (\theta_1 - \theta_2)^\top \phi(s'|s,a) | \\
    &\le \max_{a\in\cA} \max_{s\in\cS} \max_{s'\in\cS_{h,s,a}} \max_{\theta_1, \theta_2 \in \Theta_h} \lVert \theta_1 - \theta_2 \rVert_{A_{\tau+1,h}} \lVert \phi(s'|s,a) \rVert_{A_{\tau+1,h}^{-1}} \\
    &\le 2 \beta^0_\tau(\delta) \max_{a\in\cA} \max_{s\in\cS} \max_{s'\in\cS_{h,s,a}} \lVert \phi(s'|s,a) \rVert_{A_{\tau+1,h}^{-1}} &&\text{(Lemma~\ref{lemma:confidence set exploration})} \\
    &= 2 \beta^0_\tau(\delta) \left[ \max_{a\in\cA} \max_{s\in\cS} \max_{s'\in\cS_{h,s,a}} \lVert \phi(s'|s,a) \rVert_{A_{\tau+1,h}^{-1}}^2  \right]^{1/2} \\
    &= 2 \beta^0_\tau(\delta) \tau^{-1/2} \left[ \sum_{t=1}^\tau \max_{a\in\cA} \max_{s\in\cS} \max_{s'\in\cS_{h,s,a}} \lVert \phi(s'|s,a) \rVert_{A_{\tau+1,h}^{-1}}^2  \right]^{1/2} \,.
\end{align*}
We now use that $ A_{\tau+1,h} \succcurlyeq \tilde U_{t,h} $ for all $t\in \llbracket \tau \rrbracket$ and the definition of $ (\tilde s_{t,h}, a_{t,h}, \tilde s'_{t,h}) $ to get
\begin{align*}
    \diam(\Theta_h) &\le 2 \beta^0_\tau(\delta) \tau^{-1/2} \left[ \sum_{t=1}^\tau \max_{a\in\cA} \max_{s\in\cS} \max_{s'\in\cS_{h,s,a}} \lVert \phi(s'|s,a) \rVert_{\tilde U_{t,h}^{-1}}^2  \right]^{1/2} \\
    &\le 2 \beta^0_\tau(\delta) \tau^{-1/2} \left[ \sum_{t=1}^\tau \lVert \phi(\tilde s'_{t,h}|\tilde s_{t,h} ,a_{t,h} ) \rVert_{\tilde U_{t,h}^{-1}}^2  \right]^{1/2} \\
    &= 2 \beta^0_\tau(\delta) \tau^{-1/2} \left(\dfrac{\kappa}{\rho}\right)^{1/2} \left[ \sum_{t=1}^\tau \rho\kappa \lVert \phi(\tilde s'_{t,h}|\tilde s_{t,h} ,a_{t,h} ) \rVert_{\tilde U_{t,h}^{-1}}^2  \right]^{1/2} \\
    &\le 4 \beta^0_\tau(\delta) \tau^{-1/2} \left(\dfrac{\kappa}{\rho}\right)^{1/2} \left[ d \log\left( 1 + \dfrac{\tau}{d} \right) \right]^{1/2} \,. &&\text{\citep[Lemma~9]{faury2022jointly}}
\end{align*}
Thus choosing 
\begin{equation*}
    \tau = 4^5 \left(\dfrac{\kappa}{\rho}\right)^{8} (1+3\sqrt{2})^4 d^6 (B+3)^2 \log\left(\dfrac{TH}{\delta}\right) \left[\log\left(1+\dfrac{T}{d}\right)\right]^6
\end{equation*}
concludes the proof.

\end{proof}

\subsection{Proof of Theorem~\ref{theorem:regret bound}}\label{appendix:proof regret bound}

In this section, we recall and prove our regret bound.
Before stating our result, we introduce a constant $\nu_{\max}>0$, which controls the variance term in the regret analysis without affecting the asymptotic behaviour of the bound. Formally for all stages $h\ge 1$ and states $ s\in\cS $
\begin{equation*}
    \nu_h(s) := \begin{cases}
    \dfrac{2\max_{a_1, a_2 \in \cA} \left| \Var_{a_1}\left(V_{h+1}^{*}\right) - \Var_{a_2}\left(V_{h+1}^{*}\right) \right|}{ \max_{a \in \cA} Q_h^*(s,a) - \max_{a \notin \cA_h^*(s)} Q_h^*(s,a) } & \text{if } \cA \setminus \cA_h^*(s) \neq \emptyset \\
    0 & \text{otherwise}
    \end{cases}
\end{equation*}
where $\cA_h^*(s) = \argmax_{a\in\cA} Q_h^*(s,a)$ is the set of optimal actions at state $s$.
We define $\nu_{\max} := \max_{h,s\in \llbracket H \rrbracket \times \cS} \nu_h(s)$. When $\cA$ is finite, $\nu_{\max}$ is finite.  Note that $\nu_{\max}$ plays no role in the leading term of the regret bound, only appearing in lower-order terms of order $O(T^{1/4})$ and $O(1)$ in Theorem~\ref{theorem:regret bound}. Extending the analysis to continuous action spaces, where the suboptimality gap may approach zero, is left for future work.

By construction, any action maximising the combined Q-value and variance must strictly belong to $\cA_h^*(s)$. Thus, we can choose an optimal policy $\pi^*$ defined as:
\begin{equation}\label{eq:well state def pi star}
    \pi_h^*(s) \in \argmax_{a\in\cA_h^*(s)} \left( \nu_{\max} Q_h^*(s,a) + \Var_{s'\sim p_h^{*}(\cdot|s,a)} \left[ V_{h+1}^{*}(s') \right] \right) \,.
\end{equation}
\cite[Appendix~F.4]{boudart2026enjoying} proved that this optimal policy exists. Moreover it can be computed recursively.
We now introduce our regret bound.

\TheoremRegretBound*

\begin{proof}
Throughout the proof we assume that for all $t,h\ge 1$:
\begin{equation*}
    \diam_{\cA, \cS, h}(\Theta_h) \le \dfrac{1}{3\sqrt{2}} \qquad \text{and} \qquad
    \lVert \theta_h^* - \htheta_{t,h} \rVert_{\cH_{t,h}} \le \beta_t(\delta)
\end{equation*}
which is verified with probability $1-2\delta$ thanks to Lemma~\ref{lemma:constant diameter} and Lemma~\ref{lemma:confidence set learning phase}, we apply a union bound at the end.
The regret of the exploration phase is smaller than 
\begin{equation*}
    \tau H \le 4^5 (1+3\sqrt{2})^4 \dfrac{\kappa^8}{\rho^8} d^6 H (B+3)^2 \log(TH/\delta) \log^6 T \,.
\end{equation*}
Let us now focus on the learning phase of the algorithm.

\textbf{Step 1: Regret Decomposition.}

We decompose the regret as follows
\begin{align*}
    \Reg_T :&= \sumT V_{t,1}^{\pi^*, p^{*}} (s_{t,1}) - V_{t,1}^{\pi_t, p^{*}}(s_{t,1}) \\
    &= \underbrace{\sumT V_{t,1}^{\pi^*, p^{*}} (s_{t,1}) - \tilde V_{t,1} (s_{t,1}) }_{\Reg_T(\mathrm{Policy})}
    + \underbrace{\sumT \tilde V_{t,1} (s_{t,1}) - V_{t,1}^{\pi_t, p^{*}}(s_{t,1}) }_{\Reg_T(\MDP)}
\end{align*}

\textbf{Step 2: Upper Bounding $ \Reg_T(\mathrm{Policy}) $.}

By the definition of $\tilde V_{t,h}$ (see Equation~\eqref{eq:definition optimistic reward}) and because $\theta_h^* \in \cC_{t,h}(\delta)$, for all $t,h\ge 1$ we have $ \tilde V_{t,h}(\cdot) \ge V_{h}^*(\cdot) $. Thus
\begin{equation*}
    \Reg_T (\mathrm{Policy}) := V_{1}^{*}(s_{t,1}) - \tilde V_{t,1}(s_{t,1}) \le 0 \,.
\end{equation*}

\textbf{Step 3: Upper Bounding $\Reg_T(\MDP)$.}

Let us denote by $\hat\bbQ_t $ the push operator associated to $\hat q_t$.
Let us study the instantaneous regret at time $t$, we have that
\begin{align*}
    \tilde V_{t,1} &(s_{t,1}) - V_{t,1}^{\pi_t, p^{*}} (s_{t,1}) 
    = \tilde Q_{t,1} (s_{t,1}, \pi_{t,1}(s_{t,1})) - Q_{t,1}^{\pi_t, p^{*}} (s_{t,1}, \pi_{t,1}(s_{t,1})) \\
    &= r_1(s_{t,1}, \pi_{t,1}(s_{t,1})) + \hat \bbQ_{t,1} \tilde V_{t,2} (s_{t,1}, \pi_{t,1}(s_{t,1})) - r_1(s_{t,1}, \pi_{t,1}(s_{t,1})) - \bbP_1 V_{t,2}^{\pi_t, p^{*}} (s_{t,1}, \pi_{t,1}(s_{t,1})) \\
    &= \hat\bbQ_{t,1} \tilde V_{t,2} (s_{t,1}, \pi_{t,1}(s_{t,1})) - \bbP_1 V_{t,2}^{\pi_t, p^{*}} (s_{t,1}, \pi_{t,1}(s_{t,1})) \,.
\end{align*}
We add and subtract two terms consecutively
\begin{align*}
    \tilde V_{t,1} &(s_{t,1}) - V_{t,1}^{\pi_t, p^{*}} (s_{t,1}) 
    = \bbP_1 \left( \tilde V_{t,2} - V_{t,2}^{\pi_t, p^{*}} \right) (s_{t,1}, \pi_{t,1}(s_{t,1})) + (\hat \bbQ_{t,1} - \bbP_1) \tilde V_{t,2} (s_{t,1}, \pi_{t,1}(s_{t,1})) \\
    &= \underbrace{ \bbP_1 \left( \tilde V_{t,2} - V_{t,2}^{\pi_t, p^{*}} \right) (s_{t,1}, \pi_{t,1}(s_{t,1})) - \left( \tilde V_{t,2} - V_{t,2}^{\pi_t, p^{*}} \right) (s_{t,2}) }_{\cM_{t,1}} \\
    &\qquad + \underbrace{ \left(\tilde V_{t,2} - V_{t,2}^{\pi_t, p^{*}} \right) (s_{t,2}) }_{\text{recursive term on }h} + (\hat\bbQ_{t,1} - \bbP_1) \tilde V_{t,2} (s_{t,1}, \pi_{t,1}(s_{t,1})) \,.
\end{align*}
Define
\begin{equation*}
    \cM_{t,h} := \bbP_h \left( \tilde V_{t,h+1} - V_{t,h+1}^{\pi_t, p^{*}} \right) (s_{t,h}, \pi_{t,h}(s_{t,h})) - \left( \tilde V_{t,h+1} - V_{t,h+1}^{\pi_t, p^{*}} \right) (s_{t,h+1}) \,.
\end{equation*}
Applying this recursively on $h$ we have
\begin{equation*}
    \Reg_T(\MDP) = \sumT \sumH (\hat\bbQ_{t,h} - \bbP_h) \tilde V_{t,h+1} (s_{t,h}, \pi_{t,h}(s_{t,h})) + \sumT \sumH \cM_{t,h} \,.
\end{equation*}
As $\cM_{t,h}$ is a martingale difference sequence with $\cM_{t,h} \le 2H$, we can apply Azuma-Hoeffding inequality to obtain with probability $1-\delta$
\begin{equation*}
    \Reg_T(\MDP) \le \sumT \sumH (\hat\bbQ_{t,h} - \bbP_h) \tilde V_{t,h+1} (s_{t,h}, \pi_{t,h}(s_{t,h})) + H^{3/2} \sqrt{2T\log(2/\delta)} \,.
\end{equation*}
We now add and subtract the true optimal state-value function $V^*_h$:
\begin{align}
    \Reg_T(\MDP) &\le \sumT \sumH \sum_{s'\in\cS_{t,h}} \left( \hat q_{t,h}(s'|s_{t,h}, \pi_{t,h}(s_{t,h})) - p_{h}^{*}(s'|s_{t,h}, \pi_{t,h}(s_{t,h})) \right) V^*_{t,h+1}(s') \nonumber \\
    &\quad + \sumT \sumH \sum_{s'\in\cS_{t,h}} \left( \hat q_{t,h}(s'|s_{t,h}, \pi_{t,h}(s_{t,h})) - p_{h}^{*}(s'|s_{t,h}, \pi_{t,h}(s_{t,h})) \right)  \left( \tilde V_{t,h+1}(s') - V^*_{t,h+1}(s') \right) \nonumber \\
    &\quad + H^{3/2} \sqrt{2T\log(2/\delta)} \,. \label{eq:proof regret bound 2}
\end{align}
We tackle the first sum.
Let us now define $ \Gamma_{t,h,s,a} : \R^{N_{h,s,a}} \to \R $ for $t,h,s,a \in \llbracket T \rrbracket \times \llbracket H \rrbracket \times \cS \times \cA $ as
\begin{equation*}
    \Gamma_{t,h,s,a}(u) := \sum_{s'\in\cS_{h,s,a}} \dfrac{\exp(u_{s'}) }{1 + \sum_{s''\in\cS_{h,s,a}} \exp(u_{s''}) } V_{t,h+1}^{*} (s') \,.
\end{equation*}
We denote $\Gamma_{t,h} := \Gamma_{t,h,s_{t,h}, \pi_{t,h}(s_{t,h})}$.
Moreover, we denote $ u_{h,s,a}^{p^{*}}, u_{h,s,a}^{\hat q_t} \in \R^{N_{h,s,a}} $ the vectors respectively defined by $ [ u_{h,s,a}^{p^{*}} ]_{s'} = (\theta_h^*)^\top \phi_{s,a}^{s'} $ and $ [ u_{h,s,a}^{\hat q_t} ]_{s'} $ the logits associated to $ \hat q_{t,h}(s'|s,a) $. We rearrange the terms in the first sum of Equation~\eqref{eq:proof regret bound 2} as follows:
\begin{multline*}
    \sumT \sumH \sum_{s'\in\cS_{t,h}} \left( q_{t,h}^*(s'|s_{t,h}, \pi_{t,h}(s_{t,h})) - p_{h}^{*}(s'|s_{t,h}, \pi_{t,h}(s_{t,h})) \right) V^*_{t,h+1}(s') \\
    = \sumT \sumH \Gamma_{t,h} \left( u_{h,s_{t,h},\pi_{t,h}(s_{t,h})}^{\hat q_t} \right) - \Gamma_{t,h} \left( u_{h,s_{t,h},\pi_{t,h}(s_{t,h})}^{p^{*}} \right) \,.
\end{multline*}

We apply a Taylor decomposition on $ \Gamma_{t,h} \left( u_{h,s_{t,h},\pi_{t,h}(s_{t,h})}^{\hat q_t} \right) $, there exists $ \bar u_{h,s_{t,h},\pi_{t,h}(s_{t,h})} := \nu u_{h,s_{t,h},\pi_{t,h}(s_{t,h})}^{\hat q_t} + (1-\nu) u_{h,s_{t,h},\pi_{t,h}(s_{t,h})}^{p^{*}} $ with $\nu \in [0,1] $ such that:
\small
\begin{align}
    \Gamma&_{t,h} \left( u_{h,s_{t,h},\pi_{t,h}(s_{t,h})}^{\hat q_t} \right) 
    = \Gamma_{t,h} \left( u_{h,s_{t,h},\pi_{t,h}(s_{t,h})}^{p^{*}} \right) \nonumber \\
    &+ \nabla \Gamma_{t,h} \left( u_{h,s_{t,h},\pi_{t,h}(s_{t,h})}^{p^{*}} \right)^\top \left( u_{h,s_{t,h},\pi_{t,h}(s_{t,h})}^{\hat q_t} - u_{h,s_{t,h},\pi_{t,h}(s_{t,h})}^{p^{*}} \right) \nonumber \\
    &+ \dfrac{1}{2} \left( u_{h,s_{t,h},\pi_{t,h}(s_{t,h})}^{\hat q_t} - u_{h,s_{t,h},\pi_{t,h}(s_{t,h})}^{p^{*}} \right)^\top \nabla^2 \Gamma_{t,h} \left( \bar u_{h,s_{t,h},\pi_{t,h}(s_{t,h})} \right) \left( u_{h,s_{t,h},\pi_{t,h}(s_{t,h})}^{\hat q_t} - u_{h,s_{t,h},\pi_{t,h}(s_{t,h})}^{p^{*}} \right) \,. \label{eq:proof regret bound well 3}
\end{align}
\normalsize
Let us first write the gradient of $\Gamma$. Let $s'\in \cS_{t,h} $
\begin{align*}
    \left[ \nabla \Gamma_{t,h} \left( u_{h,s_{t,h},\pi_{t,h}(s_{t,h})}^{p^{*}} \right) \right]_{s'} &= 
    p_{s_{t,h}, \pi_{t,h}(s_{t,h})}^{s'} (\theta_h^*) V_{t,h+1}^{*}(s') \\
    &\quad- \sum_{s''\in\cS_{t,h}} p_{s_{t,h}, \pi_{t,h}(s_{t,h})}^{s'} (\theta_h^*) p_{s_{t,h}, \pi_{t,h}(s_{t,h})}^{s''} (\theta_h^*) V_{t,h+1}^{*}(s'') \\
    &= p_{s_{t,h}, \pi_{t,h}(s_{t,h})}^{s'} (\theta_h^*) \left( V_{t,h+1}^{*}(s') - \E_{s''\sim p^{*}} \left[ V_{t,h+1}^{*}(s'') | s_{t,h},\pi_{t,h}(s_{t,h}) \right] \right) \,.
\end{align*}
A crucial property of the gradient of the softmax is that its components sum to 0. Indeed
\begin{align*}
    \sum_{s'\in\cS_{t,h}} &\left[ \nabla \Gamma_{t,h} \left( u_{h,s_{t,h},\pi_{t,h}(s_{t,h})}^{p^{*}} \right) \right]_{s'} \\
    &= \sum_{s'\in\cS_{t,h}} p_{s_{t,h}, \pi_{t,h}(s_{t,h})}^{s'} (\theta_h^*) \left( V_{t,h+1}^{*}(s') - \E_{s''\sim p^{*}} \left[ V_{t,h+1}^{*}(s'') | s_{t,h},\pi_{t,h}(s_{t,h}) \right] \right) \\
    &= \sum_{s'\in\cS_{t,h}} p_{s_{t,h}, \pi_{t,h}(s_{t,h})}^{s'} (\theta_h^*) V_{t,h+1}^{*}(s') - \E_{s''\sim p^{*}} \left[ V_{t,h+1}^{*}(s'') | s_{t,h},\pi_{t,h}(s_{t,h}) \right] \\
    &= 0 \,.
\end{align*}
Because the components sum to 0, we can subtract $ \bar \phi_{t,h} = \sum_{s'\in\cS_{t,h}} p_{s_{t,h}, \pi_{t,h}(s_{t,h})}^{s'} (\theta_h^*) \phi_{s_{t,h},a_{t,h}}^{s'} $ inside the inner product without altering the result:
\begin{align*}
    &\nabla \Gamma_{t,h}\left( u_{h,s_{t,h},\pi_{t,h}(s_{t,h})}^{p^{*}} \right)^\top \left( u_{h,s_{t,h},\pi_{t,h}(s_{t,h})}^{\hat q_t} - u_{h,s_{t,h},\pi_{t,h}(s_{t,h})}^{p^{*}} \right) \\
    &\qquad= \sum_{s'\in\cS_{t,h}}  \left[ \nabla \Gamma_{t,h} \left( u_{h,s_{t,h},\pi_{t,h}(s_{t,h})}^{p^{*}} \right) \right]_{s'} \left( \tilde \theta_{t,h} - \theta_h^* \right)^\top \phi_{s_{t,h}, a_{t,h}}^{s'} \\
    &\qquad= \sum_{s'\in\cS_{t,h}}  \left[ \nabla \Gamma_{t,h} \left( u_{h,s_{t,h},\pi_{t,h}(s_{t,h})}^{p^{*}} \right) \right]_{s'} \left( \tilde \theta_{t,h} - \theta_h^* \right)^\top \left( \phi_{s_{t,h}, a_{t,h}}^{s'} -  \bar \phi_{t,h} \right) \\
    &\qquad\le \sum_{s'\in\cS_{t,h}} p_{s_{t,h}, a_{t,h}}^{s'} (\theta_h^*) \left| V_{t,h+1}^{*}(s') - \E_{s''\sim p^{*}} \left[ V_{t,h+1}^{*}(s'') \right] \right| \; \lVert \tilde \theta_{t,h} - \theta_h^* \rVert_{\cH_{t,h}} \lVert \phi_{s_{t,h}, a_{t,h}}^{s'} - \bar \phi_{t,h} \rVert_{\cH_{t,h}^{-1}} \\
    &\qquad \le 2 \beta_t(\delta) \sum_{s'\in\cS_{t,h}} p_{s_{t,h}, a_{t,h}}^{s'} (\theta_h^*) \left| V_{t,h+1}^{*}(s') - \E_{s''\sim p^{*}} \left[ V_{t,h+1}^{*}(s'') \right] \right| \; \lVert \phi_{s_{t,h}, a_{t,h}}^{s'} - \bar \phi_{t,h} \rVert_{\cH_{t,h}^{-1}} \\
    &\qquad \le 2 \beta_t(\delta) \left[ \sum_{s'\in\cS_{t,h}} p_{s_{t,h}, a_{t,h}}^{s'} (\theta_h^*) \left| V_{t,h+1}^{*}(s') - \E_{s''\sim p^{*}} \left[ V_{t,h+1}^{*}(s'') \right] \right|^2 \right]^{1/2} \\
    &\qquad\qquad \cdot \left[ \sum_{s'\in\cS_{t,h}} p_{s_{t,h}, a_{t,h}}^{s'} (\theta_h^*) \lVert \phi_{s_{t,h}, a_{t,h}}^{s'} - \bar \phi_{t,h} \rVert_{\cH_{t,h}^{-1}}^2 \right]^{1/2}
\end{align*}
where the first and last inequalities are due to Cauchy-Schwarz inequality, and the second to Lemma~\ref{lemma:confidence set learning phase}.
By substituting into Equation~\eqref{eq:proof regret bound well 3} we obtain
\small
\begin{align*}
    \Gamma&_{t,h} \left( u_{h,s_{t,h},\pi_{t,h}(s_{t,h})}^{\hat q_t} \right) 
    = \Gamma_{t,h} \left( u_{h,s_{t,h},\pi_{t,h}(s_{t,h})}^{p^{*}} \right) \\
    &+2\beta_t(\delta) \left[ \sum_{s'\in\cS_{t,h}} p_{s_{t,h}, a_{t,h}}^{s'} (\theta_h^*) \left| V_{t,h+1}^{*}(s') - \E_{s''\sim p^{*}} \left[ V_{t,h+1}^{*}(s'') \right] \right|^2 \right]^{1/2} \left[ \sum_{s'\in\cS_{t,h}} p_{s_{t,h}, a_{t,h}}^{s'} (\theta_h^*) \lVert \phi_{s_{t,h}, a_{t,h}}^{s'} - \bar \phi_{t,h} \rVert_{\cH_{t,h}^{-1}}^2 \right]^{1/2} \\
    &+ \dfrac{1}{2} \left( u_{h,s_{t,h},\pi_{t,h}(s_{t,h})}^{\hat q_t} - u_{h,s_{t,h},\pi_{t,h}(s_{t,h})}^{p^{*}} \right)^\top \nabla^2 \Gamma_{t,h} \left( \bar u_{h,s_{t,h},\pi_{t,h}(s_{t,h})} \right) \left( u_{h,s_{t,h},\pi_{t,h}(s_{t,h})}^{\hat q_t} - u_{h,s_{t,h},\pi_{t,h}(s_{t,h})}^{p^{*}} \right) \,.
\end{align*}
\normalsize
We now substitute this Taylor decomposition into Equation~\eqref{eq:proof regret bound 2} and use that for all $t\ge 1$ we have $ \beta_t(\delta) \le \beta_T(\delta)$ to get
\small
\begin{align*}
    \Reg_T(\MDP) \le & 2\beta_T(\delta) \sumT \sumH  \left[ \sum_{s'\in\cS_{t,h}} p_{s_{t,h}, a_{t,h}}^{s'} (\theta_h^*) \left| V_{t,h+1}^{*}(s') - \E_{s''\sim p^{*}} \left[ V_{t,h+1}^{*}(s'') \right] \right|^2 \right]^{1/2} \\
    &\qquad \cdot \left[ \sum_{s'\in\cS_{t,h}} p_{s_{t,h}, a_{t,h}}^{s'} (\theta_h^*) \lVert \phi_{s_{t,h}, a_{t,h}}^{s'}- \bar \phi_{t,h} \rVert_{\cH_{t,h}^{-1}}^2 \right]^{1/2} \\ 
    +& \sumT \sumH \dfrac{1}{2} \left( u_{h,s_{t,h},\pi_{t,h}(s_{t,h})}^{\hat q_t} - u_{h,s_{t,h},\pi_{t,h}(s_{t,h})}^{p^{*}} \right)^\top \nabla^2 \Gamma_{t,h} \left( \bar u_{h,s_{t,h},\pi_{t,h}(s_{t,h})} \right) \\
    &\qquad \cdot \left( u_{h,s_{t,h},\pi_{t,h}(s_{t,h})}^{\hat q_t} - u_{h,s_{t,h},\pi_{t,h}(s_{t,h})}^{p^{*}} \right) \\
    +& \sumT \sumH \sum_{s'\in\cS_{t,h}} \left( \hat q_{t,h}(s'|s_{t,h}, \pi_{t,h}(s_{t,h})) - p_{h}^{*}(s'|s_{t,h}, \pi_{t,h}(s_{t,h})) \right)  \left( \tilde V_{t,h+1}(s') - V^*_{t,h+1}(s') \right) \\
    +& H^{3/2} \sqrt{2T\log(2/\delta)} \,.
\end{align*}
\normalsize
We apply Cauchy-Schwarz inequality on the first term to have
\small
\begin{align*}
    \Reg_T(\MDP) \le & 2\beta_T(\delta) \left[ \sumT \sumH \sum_{s'\in\cS_{t,h}} p_{s_{t,h}, a_{t,h}}^{s'} (\theta_h^*) \left| V_{t,h+1}^{*}(s') - \E_{s''\sim p^{*}} \left[ V_{t,h+1}^{*}(s'') \right] \right|^2 \right]^{1/2} \\
    &\qquad \cdot \left[ \sumT \sumH \sum_{s'\in\cS_{t,h}} p_{s_{t,h}, a_{t,h}}^{s'} (\theta_h^*) \lVert \phi_{s_{t,h}, a_{t,h}}^{s'}- \bar \phi_{t,h} \rVert_{\cH_{t,h}^{-1}}^2 \right]^{1/2} \\ 
    +& \sumT \sumH \dfrac{1}{2} \left( u_{h,s_{t,h},\pi_{t,h}(s_{t,h})}^{\hat q_t} - u_{h,s_{t,h},\pi_{t,h}(s_{t,h})}^{p^{*}} \right)^\top \nabla^2 \Gamma_{t,h} \left( \bar u_{h,s_{t,h},\pi_{t,h}(s_{t,h})} \right) \\
    &\qquad \cdot \left( u_{h,s_{t,h},\pi_{t,h}(s_{t,h})}^{\hat q_t} - u_{h,s_{t,h},\pi_{t,h}(s_{t,h})}^{p^{*}} \right) \\
    +& \sumT \sumH \sum_{s'\in\cS_{t,h}} \left( \hat q_{t,h}(s'|s_{t,h}, \pi_{t,h}(s_{t,h})) - p_{h}^{*}(s'|s_{t,h}, \pi_{t,h}(s_{t,h})) \right)  \left( \tilde V_{t,h+1}(s') - V^*_{t,h+1}(s') \right) \\
    +& H^{3/2} \sqrt{2T\log(2/\delta)} \,.
\end{align*}
\normalsize

\textbf{Step 4: Recovering $\Reg_T(\MDP)$ in the upper bound.}

We show that the sum of $ \tilde V_{t,h+1}(s') - V^*_{t,h+1}(s') $ is linked to the regret $\Reg_T$ and a martingale difference term. We first use the definition of $\pi^*$:
\begin{equation*}
     \tilde V_{t,h+1}(s') - V^*_{t,h+1}(s') \le  \tilde V_{t,h+1}(s') - V^{\pi_t, p^*}_{t,h+1}(s') \,.
\end{equation*}
We expand the softmax difference via the Mean Value Theorem and the gradient identity $\nabla_u \mu(u)_{s'} = \mu(u)_{s'} (e_{s'} - \mu(s'))$ where $\mu : \R^{N_{h,s,a}}\to \R$ denotes the softmax (see for instance \cite[Equation~(4.106)]{bishop2006pattern}). 
Let $(u^*_{t,h}) := (\theta_h^*)^\top (\phi_{s_{t,h},\pi_{t,h}(s_{t,h})}^{s'} - \E_{s''\sim \hat q_{t,h}} [ \phi_{s_{t,h},\pi_{t,h}(s_{t,h})}^{s''} ] ) $ and $(\hat u_{t,h}) := (\tilde \theta_{t,h})^\top (\phi_{s_{t,h},\pi_{t,h}(s_{t,h})}^{s'} - \E_{s''\sim \hat q_{t,h}} [ \phi_{s_{t,h},\pi_{t,h}(s_{t,h})}^{s''} ] ) $ be the centered logits associated respectively with $p^*$ and $\hat q_t$. Centering is harmless as the softmax is invariant under uniform logit shifts.
There exists $w$ on the segment between the logits $\hat u_{t,h}$ and $u^*_{t,h}$ such that 
\begin{equation*}
    \hat q_{t,h}(s') - p_{h}^{*}(s') = \mu(w)_{s'} \left( \left( (\hat u_{t,h})_{s'} - (u_{t,h}^*)_{s'} \right) - \E_{s''\sim\mu(w)}\left[ (\hat u_{t,h})_{s''} - (u_{t,h}^*)_{s''} \right] \right) \,.
\end{equation*}
Substituting and applying Cauchy-Schwarz inequality we get
\begin{align*}
    \sumT \sumH \sum_{s'\in\cS_{t,h}} &\left( \hat q_{t,h}(s'|s_{t,h}, \pi_{t,h}(s_{t,h})) - p_{h}^{*}(s'|s_{t,h}, \pi_{t,h}(s_{t,h})) \right)  \left( \tilde V_{t,h+1}(s') - V^*_{t,h+1}(s') \right) \\
    &\le \left[ \sumT \sumH \Var_{s'\sim\mu(w)} \left[ (\hat u_{t,h})_{s'} - (u^*_{t,h})_{s'} \right]  \right]^{1/2} \\
    &\qquad \cdot \left[ \sumT \sumH \sum_{s'\in\cS_{t,h}} \mu(w)_{s'} \left( \tilde V_{t,h+1}(s') - V^{\pi_t, p^*}_{t,h+1}(s') \right)^2 \right]^{1/2} \,.
\end{align*}
By definition of the centered logits, we have
\begin{equation*}
    \left(\hat u_{t,h} - u^*_{t,h} \right)_{s'} 
    = \left( \tilde \theta_{t,h} - \theta_h^* \right)^\top \left( \phi_{s_{t,h}, \pi_{t,h}(s_{t,h})}^{s'} - \E_{s''\sim \hat q_{t,h}} \left[ \phi_{s_{t,h},\pi_{t,h}(s_{t,h})}^{s''} \right] \right) \,.
\end{equation*}
We apply Cauchy-Schwarz inequality and get 
\begin{multline*}
     \left(\hat u_{t,h} - u^*_{t,h} \right)_{s'} 
     \le \left\lVert \tilde \theta_{t,h} - \theta_h^* \right\rVert_{\cH_{t,h}} \llVert \phi_{s_{t,h}, \pi_{t,h}(s_{t,h})}^{s'} - \E_{s''\sim \hat q_{t,h}} \left[ \phi_{s_{t,h},\pi_{t,h}(s_{t,h})}^{s''} \right] \rrVert_{\cH_{t,h}^{-1}} \\
     \le 2 \beta_t(\delta) \llVert \phi_{s_{t,h}, \pi_{t,h}(s_{t,h})}^{s'} - \E_{s''\sim \hat q_{t,h}} \left[ \phi_{s_{t,h},\pi_{t,h}(s_{t,h})}^{s''} \right] \rrVert_{\cH_{t,h}^{-1}}
\end{multline*}
where the second inequality is because $\tilde \theta_{t,h}$ and $ \theta_h^* $ lie in the confidence set $\cC_{t,h}(\delta)$. 
For any $u, w$ with $\|w - u\|_\infty \leq c$, one has $\mu(w)_{s'}/\mu(u)_{s'} \leq e^{2c}$ for all $s'$, by comparing numerator and denominator of the softmax. The exploration phase ensures $\lVert w - \hat u_{t,h} \rVert_\infty \le 1$, which thus yields $\mu(w)_{s'} \le e^2 \mu(\hat u_{t,h})_{s'}$.
Therefore, we get
\begin{multline*}
    \Var_{s'\sim\mu(w)} \left[ (\hat u_{t,h})_{s'} - (u^*_{t,h})_{s'} \right] \\
    \le 4 \beta_t(\delta)^2 \E_{s'\sim \hat q_{t,h}} \left[ \llVert \phi_{s_{t,h}, \pi_{t,h}(s_{t,h})}^{s'} - \E_{s''\sim \hat q_{t,h}} \left[ \phi_{s_{t,h},\pi_{t,h}(s_{t,h})}^{s''} \right] \rrVert_{\cH_{t,h}^{-1}}^2 | s_{t,h}, \pi_{t,h}(s_{t,h}) \right] \,.
\end{multline*}
We apply \cite[Lemma~6~(IV)]{li2024provably} and bound the sum
\begin{equation*}
    \sumT \sumH \E_{s'\sim \hat q_{t,h}} \left[ \llVert \phi_{s_{t,h}, \pi_{t,h}(s_{t,h})}^{s'} - \E_{s''\sim \hat q_{t,h}} \left[ \phi_{s_{t,h},\pi_{t,h}(s_{t,h})}^{s''} \right] \rrVert_{\cH_{t,h}^{-1}}^2 | s_{t,h}, \pi_{t,h}(s_{t,h}) \right]
    \le 2d H \log(1+T) \,.
\end{equation*}

We now tackle the sum of state-value functions.
Because $\tilde V_{t,h+1}(s') \ge V^{\pi_t, p^*}_{t,h+1}(s')$ we have $\tilde V_{t,h+1}(s') - V^{\pi_t, p^*}_{t,h+1}(s') \le H$. Thus
\begin{align}
    \sumT &\sumH \sum_{s'\in\cS_{t,h}} \left( \hat q_{t,h}(s'|s_{t,h}, \pi_{t,h}(s_{t,h})) - p_{h}^{*}(s'|s_{t,h}, \pi_{t,h}(s_{t,h})) \right)  \left( \tilde V_{t,h+1}(s') - V^*_{t,h+1}(s') \right) \nonumber \\
    &\le 2 \sqrt{2} e^2 \beta_T(\delta) \sqrt{d} H \sqrt{\log(1+T)} \nonumber \\
    &\qquad \cdot \left[ \sumT \sumH \sum_{s'\in\cS_{t,h}} p_{h}^{*}(s'|s_{t,h}, \pi_{t,h}(s_{t,h})) \left( \tilde V_{t,h+1}(s') - V^{\pi_t, p^*}_{t,h+1}(s') \right) \right]^{1/2} \,. \label{eq:proof regret bound 4}
\end{align}
By adding and subtracting a term we get
\begin{align}
    & \left(\tilde V_{t,h} - V^{\pi_t, p^*}_{t,h}\right) (s_{t,h}) \nonumber \\
    &= r_h(s_{t,h}, \pi_{t,h}(s_{t,h})) + \hat \bbQ_{t,h} \tilde V_{t,h+1}(s_{t,h}, \pi_{t,h}(s_{t,h})) - r_h(s_{t,h}, \pi_{t,h}(s_{t,h})) - \bbP_h V^{\pi_t, p^*}_{t,h+1} (s_{t,h}, \pi_{t,h}(s_{t,h})) \nonumber \\
    &= \underbrace{\left( \hat \bbQ_{t,h} - \bbP_h \right) \tilde V_{t,h+1}(s_{t,h}, \pi_{t,h}(s_{t,h}))}_{I_{t,h}} + \bbP_h \left( \tilde V_{t,h+1} - V^{\pi_t, p^*}_{t,h+1} \right) (s_{t,h}, \pi_{t,h}(s_{t,h})) \label{eq:proof regret bound 3}
\end{align}
Because $\hat q_{t,h}$ maximises $I_{t,h}$ over a set containing $p^*_h$ we have
\begin{equation*}
    \bbP_h \left( \tilde V_{t,h+1} - V^{\pi_t, p^*}_{t,h+1} \right) (s_{t,h}, \pi_{t,h}(s_{t,h})) 
    \le \left(\tilde V_{t,h} - V^{\pi_t, p^*}_{t,h} \right) (s_{t,h}) \,.
\end{equation*}
Let us define
\begin{equation*}
    \xi_{t,h+1} :=  \bbP_h \left( \tilde V_{t,h+1} - V^{\pi_t, p^*}_{t,h+1} \right) (s_{t,h}, \pi_{t,h}(s_{t,h})) 
    - \left(\tilde V_{t,h+1} - V^{\pi_t, p^*}_{t,h+1}\right) (s_{t,h+1})
\end{equation*}
the martingale difference. By adding and subtracting a term in Equation~\eqref{eq:proof regret bound 3} we have
\begin{equation*}
     \left(\tilde V_{t,h} - V^{\pi_t, p^*}_{t,h}\right) (s_{t,h}) 
     = I_{t,h} + \underbrace{ \left(\tilde V_{t,h+1} - V^{\pi_t, p^*}_{t,h+1}\right) (s_{t,h+1}) }_{\text{recursive term}} + \xi_{t,h+1} \,.
\end{equation*}
We expand the recursion from $h$ to $H$ to get
\begin{equation*}
    \left(\tilde V_{t,h} - V^{\pi_t, p^*}_{t,h} \right) (s_{t,h})
    = \sum_{k=h}^H I_{t,k} + \sum_{k=h}^H \xi_{t,k+1} \,.
\end{equation*}
Summing this over all $h\ge 1$ yields:
\begin{equation*}
    \sumH  \left(\tilde V_{t,h} - V^{\pi_t, p^*}_{t,h} \right) (s_{t,h})
    = \sumH h I_{t,h} + \sumH h \xi_{t,h+1} \le H \sumH I_{t,h} + \sumH h \xi_{t,h+1} \,.
\end{equation*}
Because $ \sumH I_{t,h} = \tilde V_{t,1}(s_{t,1}) - V^{\pi_t, p^*}_{t,1}(s_{t,1}) - \sumH \xi_{t,h+1} $, substituting this back reveals the initial gap:
\begin{equation*}
    \sumH  \left(\tilde V_{t,h} - V^{\pi_t, p^*}_{t,h} \right) (s_{t,h}) 
    = H \left( \tilde V_{t,1}(s_{t,1}) - V^{\pi_t, p^*}_{t,1}(s_{t,1}) \right) + \sumH (h-H) \xi_{t,h+1} \,.
\end{equation*}
We now bound the sum of expectations
\begin{multline*}
    \sumT \sumH \bbP_h (\tilde V_{t,h} - V^{\pi_t, p^*}_{t,h}) (s_{t,h}, \pi_{t,h}(s_{t,h})) 
    = \sumT \sumH (\tilde V_{t,h} - V^{\pi_t, p^*}_{t,h}) (s_{t,h+1}) + \xi_{t,h+1} \\
    = H \underbrace{ \sumT  \tilde V_{t,1}(s_{t,1}) - V^{\pi_t, p^*}_{t,1}(s_{t,1}) }_{\Reg_T(\MDP)} + \sumT \sumH (h-H+1) \xi_{t,h+1} \,.
\end{multline*}
The second term is a martingale with increments bounded by $H^2$. By the Azuma-Hoeffding inequality, it is bounded by $\sqrt{2} H^{5/2} \sqrt{T \log(2/\delta)}$ with probability $1-\delta$.
Substituting this back into Equation~\eqref{eq:proof regret bound 4}:
\begin{align*}
    \sumT &\sumH \sum_{s'\in\cS_{t,h}} \left( \hat q_{t,h}(s'|s_{t,h}, \pi_{t,h}(s_{t,h})) - p_{h}^{*}(s'|s_{t,h}, \pi_{t,h}(s_{t,h})) \right)  \left( \tilde V_{t,h+1}(s') - V^*_{t,h+1}(s') \right) \\
    &\le 2 \sqrt{2} e^2 \beta_T(\delta) \sqrt{d} H^{3/2} \sqrt{\log(1+T)} \\
    &\qquad \cdot \left[ \Reg_T(\MDP) + \sqrt{2} H^{3/2} \sqrt{T \log(2/\delta)} \right]^{1/2} \,.
\end{align*}
Thus $\Reg_T(\MDP)$ is upper bounded by
\small
\begin{align}
    \Reg_T(\MDP) \le & 2\beta_T(\delta) \left[ \sumT \sumH \underbrace{ \sum_{s'\in\cS_{t,h}} p_{s_{t,h}, a_{t,h}}^{s'} (\theta_h^*) \left( V_{t,h+1}^{*}(s') - \E_{s''\sim p^{*}} \left[ V_{t,h+1}^{*}(s'') \right] \right)^2 }_{\Var_{\pi_{t,h}} \left[ V_{h+1}^{\pi^*, p^{*}}(s') \right]} \right]^{1/2} \nonumber \\
    &\qquad \cdot \left[ \sumT \sumH \sum_{s'\in\cS_{t,h}} p_{s_{t,h}, a_{t,h}}^{s'} (\theta_h^*) \lVert \phi_{s_{t,h}, a_{t,h}}^{s'}- \bar \phi_{t,h} \rVert_{\cH_{t,h}^{-1}}^2 \right]^{1/2} \nonumber \\ 
    +& \sumT \sumH \dfrac{1}{2} \left( u_{h,s_{t,h},\pi_{t,h}(s_{t,h})}^{\hat q_t} - u_{h,s_{t,h},\pi_{t,h}(s_{t,h})}^{p^{*}} \right)^\top \nabla^2 \Gamma_{t,h} \left( \bar u_{h,s_{t,h},\pi_{t,h}(s_{t,h})} \right) \nonumber \\
    &\qquad \cdot \left( u_{h,s_{t,h},\pi_{t,h}(s_{t,h})}^{\hat q_t} - u_{h,s_{t,h},\pi_{t,h}(s_{t,h})}^{p^{*}} \right) \nonumber \\
    +& 2 \sqrt{2} e^2 \beta_T(\delta) \sqrt{d} H^{3/2} \sqrt{\log(1+T)} \left[ \Reg_T(\MDP) + \sqrt{2} H^{3/2} \sqrt{T \log(2/\delta)} \right]^{1/2} \nonumber \\
    +& H^{3/2} \sqrt{2T\log(2/\delta)} \,. \label{eq:proof regret bound 5}
\end{align}
\normalsize

\textbf{Step 5: Recovering $\bar \sigma_{T}$ in the upper bound.}

Using the definition of $\pi^*$ (see Equation~\eqref{eq:well state def pi star}) we upper bound the first term of Equation~\eqref{eq:proof regret bound 5} by
\begin{equation*}
    \Var_{\pi_{t,h}} \left[ V_{h+1}^{*}(s') \right] 
    \le \Var_{\pi^*} \left[ V_{h+1}^{*}(s') \right]
    + \nu_{\max} Q_h^*(s_{t,h}, \pi_h^*(s_{t,h})) - Q_h^*(s_{t,h}, \pi_{t,h}(s_{t,h})) \,.
\end{equation*}

We show that the sum of $ Q_h^*(s_{t,h}, \pi_h^*(s_{t,h})) - Q_h^*(s_{t,h}, \pi_{t,h}(s_{t,h})) $ is linked to the regret $\Reg_T(\MDP)$ and a martingale difference term. We add and subtract a term in the instantaneous regret:
\begin{align*}
    V_{1}^{*}(s_{t,1}) - V_{t,1}^{\pi_t, p^{*}}(s_{t,1})
    &= V_{1}^{*}(s_{t,1}) - Q_{1}^{*}(s_{t,1}, \pi_{t,1}(s_{t,1})) + Q_{1}^{*}(s_{t,1}, \pi_{t,1}(s_{t,1})) - Q_{t,1}^{\pi_t, p^{*}}(s_{t,1}, \pi_{t,1}(s_{t,1})) \\
    &= Q_{1}^{*}(s_{t,1}, \pi^*(s_{t,1})) - Q_{1}^{*}(s_{t,1}, \pi_{t,1}(s_{t,1})) \\
    &\qquad + r_1(s_{t,1}, \pi_{t,1}(s_{t,1})) + \bbP_1 V_{2}^{*}(s_{t,1}, \pi_{t,1}(s_{t,1})) \\
    &\qquad - r_1(s_{t,1}, \pi_{t,1}(s_{t,1})) - \bbP_1 V_{t,2}^{\pi_t, p^{*}}(s_{t,1}, \pi_{t,1}(s_{t,1})) \,.
\end{align*}
We add and subtract another term:
\begin{align*}
    V_{1}^{*}(s_{t,1}) - V_{t,1}^{\pi_t, p^{*}}(s_{t,1})
    &= Q_{1}^{*}(s_{t,1}, \pi^*(s_{t,1})) - Q_{1}^{*}(s_{t,1}, \pi_{t,1}(s_{t,1})) \\
    &\qquad + \underbrace{ \bbP_1 \left( V_{2}^{*} - V_{t,2}^{\pi_t, p^{*}} \right) (s_{t,1}, \pi_{t,1}(s_{t,1})) - \left( V_{2}^{*} - V_{t,2}^{\pi_t, p^{*}} \right) (s_{t,2}) }_{-\tilde\cM_{t,1}} \\
    &\qquad  + \left( V_{2}^{*} - V_{t,2}^{\pi_t, p^{*}} \right) (s_{t,2}) \,.
\end{align*}
We define
\begin{equation*}
    \tilde \cM_{t,h} := \left( V_{h+1}^{*} - V_{t,h+1}^{\pi_t, p^{*}} \right) (s_{t,h+1}) - \bbP_h \left( V_{h+1}^{*} - V_{t,h+1}^{\pi_t, p^{*}} \right) (s_{t,h}, \pi_{t,h}(s_{t,h})) \,.
\end{equation*}
Applying this recursively on $h$, we have
\begin{equation*}
    \sumT \sumH Q_{h}^{*}(s_{t,h}, \pi^*(s_{t,h})) - Q_{h}^{*}(s_{t,h}, \pi_{t,h}(s_{t,h})) = \Reg_T + \sumT \sumH \tilde \cM_{t,h} \,.
\end{equation*}
As $\tilde \cM_{t,h}$ is a martingale difference sequence with $ \tilde \cM_{t,h} \le 2H $ we can apply Azuma-Hoeffding inequality to obtain with probability $1-\delta$:
\begin{equation*}
    \sumT \sumH Q_{h}^{*}(s_{t,h}, \pi^*_h(s_{t,h})) - Q_{h}^{*}(s_{t,h}, \pi_{t,h}(s_{t,h})) = \Reg_T + H^{3/2} \sqrt{2T\log(2/\delta)} \,.
\end{equation*}
Using our regret decomposition from Step~1 and optimistic argument from Step~2, we upper bound the regret $\Reg_T$ by $\Reg_T(\MDP)$:
\begin{equation*}
    \sumT \sumH Q_{h}^{*}(s_{t,h}, \pi^*_h(s_{t,h})) - Q_{h}^{*}(s_{t,h}, \pi_{t,h}(s_{t,h})) = \Reg_T(\MDP) + H^{3/2} \sqrt{2T\log(2/\delta)} \,.
\end{equation*}
By substituting into Equation~\eqref{eq:proof regret bound 5} we get
\small
\begin{align*}
    \Reg_T(\MDP) \le & 2\beta_T(\delta) \Biggg[ \underbrace{ \sumT \sumH \Var_{\pi^*} \left[ V_{h+1}^{*}(s') \right]}_{H^3 T \, \bar\sigma_{T}^2} + \nu_{\max} \Reg_T(\MDP) + \nu_{\max} H^{3/2} \sqrt{2T\log(2/\delta)} \Biggg]^{1/2} \\
    &\qquad \cdot \left[ \sumT \sumH \sum_{s'\in\cS_{t,h}} p_{s_{t,h}, a_{t,h}}^{s'} (\theta_h^*) \lVert \phi_{s_{t,h}, a_{t,h}}^{s'}- \bar \phi_{t,h} \rVert_{\cH_{t,h}^{-1}}^2 \right]^{1/2}  \\ 
    +& \sumT \sumH \dfrac{1}{2} \left( u_{h,s_{t,h},\pi_{t,h}(s_{t,h})}^{\hat q_t} - u_{h,s_{t,h},\pi_{t,h}(s_{t,h})}^{p^{*}} \right)^\top \nabla^2 \Gamma_{t,h} \left( \bar u_{h,s_{t,h},\pi_{t,h}(s_{t,h})} \right) \\
    &\qquad \cdot \left( u_{h,s_{t,h},\pi_{t,h}(s_{t,h})}^{\hat q_t} - u_{h,s_{t,h},\pi_{t,h}(s_{t,h})}^{p^{*}} \right) \\
    +& 2 \sqrt{2} e^2 \beta_T(\delta) \sqrt{d} H^{3/2} \sqrt{\log(1+T)} \left[ \Reg_T(\MDP) + \sqrt{2} H^{3/2} \sqrt{T \log(2/\delta)} \right]^{1/2} \\
    +& H^{3/2} \sqrt{2T\log(2/\delta)} \,.
\end{align*}
\normalsize

\textbf{Step 6: Bounding the sum of $ p_h^{*}(s'|s_{t,h}, \pi_{t,h}(s_{t,h})) \llVert \phi_{s,a}^{s'} - \bar \phi_{t,h} \rrVert_{\cH_{t,h}^{-1}}^2 $.}

We first use the self-concordance property of the logistic loss. 
The logistic loss is a generalized self-concordant function with a self-concordance parameter of $3\sqrt{2}$ \citep[Proposition~1]{li2024provably}. Our initial exploration phase guarantees that the confidence set $\Theta_h$ has a diameter at most $\diam_{\cA,\cS,h}(\Theta_h) \le \frac{1}{3\sqrt{2}}$ (Lemma~\ref{lemma:constant diameter}), and given that both $\theta_h^*$ and $\htheta_{t,h}$ belong to $\Theta_h$, we can apply \citep[Proposition~8]{sun2019generalized} to bound the ratio of the Hessian matrices. For all $t,h \ge 1$, we obtain:
\begin{equation*}
    \cH_{t,h}(\theta_h^*) \preccurlyeq \exp\left( 3\sqrt{2} \diam_{\cA,\cS,h}(\Theta_h) \right) \cH_{t,h} 
    \preccurlyeq e \cH_{t,h} 
\end{equation*}
where the second inequality is due to Lemma~\ref{lemma:constant diameter}.
Thus for all $t,h\ge 1$, we have
\begin{equation*}
    \llVert \phi_{s,a}^{s'} - \bar \phi_{t,h} \rrVert_{\cH_{t,h}^{-1}}^2
    \le e \llVert \phi_{s,a}^{s'} - \bar \phi_{t,h} \rrVert_{\cH_{t,h}^{-1}(\theta_h^*)}^2 \,.
\end{equation*}
We sum over $T$ and get:
\begin{multline*}
    \sumT \sumH \sum_{s'\in\cS_{t,h}} p_h^{*}(s'|s_{t,h}, \pi_{t,h}(s_{t,h})) \llVert \phi_{s,a}^{s'} - \bar \phi_{t,h} \rrVert_{\cH_{t,h}^{-1}}^2 \\
    \le e \sumT \sumH \sum_{s'\in\cS_{t,h}} p_h^{*}(s'|s_{t,h}, \pi_{t,h}(s_{t,h})) \llVert \phi_{s,a}^{s'} - \bar \phi_{t,h} \rrVert_{\cH_{t,h}^{-1}(\theta_h^*)}^2 \,.
\end{multline*}
We apply \cite[Lemma~6~(I)]{li2024provably} and get 
\begin{equation*}
     \sumT \sumH \sum_{s'\in\cS_{t,h}} p_h^{*}(s'|s_{t,h}, \pi_{t,h}(s_{t,h})) \llVert \phi_{s,a}^{s'} - \bar \phi_{t,h} \rrVert_{\cH_{t,h}^{-1}}^2 \le 2e dH \log\left( 1 + T\right) \,. 
\end{equation*}

\textbf{Step 7: Bounding the sum of $ \nabla^2 \Gamma_{t,h,s_{t,h},\pi_{t,h}(s_{t,h})} \left( \bar u_{h,s_{t,h},\pi_{t,h}(s_{t,h})} \right) $.}

This term can be bounded as in \cite[Equation~(D.10)]{lee2025improved}. We obtain for all $h\ge 1$:
\begin{multline*}
    \sumT \dfrac{1}{2} \left( u_{h,s_{t,h},\pi_{t,h}(s_{t,h})}^{q_t^*} - u_{h,s_{t,h},\pi_{t,h}(s_{t,h})}^{p^{*}} \right)^\top \nabla^2 \Gamma_{t,h,s_{t,h},\pi_{t,h}(s_{t,h})} \left( \bar u_{h,s_{t,h},\pi_{t,h}(s_{t,h})} \right) \\
    \cdot \left( u_{h,s_{t,h},\pi_{t,h}(s_{t,h})}^{q_t^*} - u_{h,s_{t,h},\pi_{t,h}(s_{t,h})}^{p^{*}} \right) 
    \le 20\kappa \beta_T(\delta)^2 d H^2 \log \left( 1 + \dfrac{T}{d\lambda} \right) 
\end{multline*}
where the $H^2$ term results from the fact that the reward, given by the state value function, takes values in $[0,H]$.

\textbf{Step 8: Putting everything together.}

We use the fact that $ x^2 - bx -c \le 0 \implies x^2 \le b^2 + 2c $ with $ x^2 = \Reg_T(\MDP) $ and get with probability $1-5\delta$:
\begin{align*}
     \Reg_T(\MDP) \le 
     & 4 \sqrt{2e} B d H^2 \bar\sigma_T \sqrt{T} \log(1+T) \sqrt{\log(T/\delta)} \\
     +& H^{3/2} \sqrt{2T\log(2/\delta)} \\
     +& 8 \sqrt{e \nu_{\max}} B d H^{5/4} T^{1/4} \log(1+T) \log(T/\delta) \sqrt{\log(2/\delta)} \\
     +&8e^2 B d H^{9/4} T^{1/4} \sqrt{\log(1+T)\log(T/\delta)\log(2/\delta)} \\
     +&32e B^2 \nu_{\max} d^2 H \log(1+T) \log(T/\delta) \\
     +& 32e^2 B^2 d^2 H^3 \log^2(1+T) \log(T/\delta) \\
     +& 40\kappa B^2 d^2 H^2 \log( 1 + T ) \log(T/\delta)  \,.
\end{align*}
where applying the union bound gives the result with probability $1-5\delta$.

\end{proof}

\section{Proof of Theorem~\ref{theorem:lower bound}}\label{appendix:proof lower bound}

In this section we exhibit an MNL mixture MDP for which $\bar\sigma_{T}$ is small in $\smash{H^{-1/2}}$. We consider the instance introduced by \cite[Section~4.1]{park2024infinite} to prove their lower bound, see Figure~\ref{figure: MDP high kappa star}. There are $H+2$ states $s_1, s_2, \dots, s_{H+1}, s_{H+2}$ where $s_{H+2}$ is an absorbing state. The action space is defined by $\cA = \{-1, 1 \}^{d-1}$. For each step $h\ge 1$ and action $a\in \cA$, the reward is given by 
$$ r_h(s_i, a) = \mathbf{1}[i=H+2] \,. $$
For all $h\ge 1$, we simplify the setting so that all model parameters $ \theta^* $  are the same and in the space
$$ \theta^* \in \{-\bar \Delta, \bar \Delta\}^{d-1} \quad\text{ where }\quad \bar \Delta := \dfrac{B}{d-1} \log\left( \dfrac{ (1-\delta)(\delta + (d-1)\Delta) }{ \delta (1-\delta - (d-1)\Delta) } \right)  $$
with $\delta = 1/H$ and $ \Delta = 1/(4\sqrt{2HT}) $.

\begin{figure}[htbp]
    \centering
    \resizebox{\textwidth}{!}{%
    \begin{tikzpicture}[
        >={Stealth[scale=1.2]},
        node distance=3.5cm and 2.5cm, 
        state/.style={circle, draw, minimum size=1.2cm, thick, fill=white},
        dots/.style={draw=none, minimum size=1.2cm, font=\Large}
    ]
        \node[state] (x1) {$s_1$};
        \node[state, right=of x1] (x2) {$s_2$};
        \node[dots, right=of x2] (dots) {$\cdots$};
        \node[state, right=of dots] (xH) {$s_H$};
        \node[state, right=of xH] (xH1) {$s_{H+1}$};
        
        \node[state, below=2.5cm of dots] (xH2) {$s_{H+2}$};
        
        \draw[->, thick] (x1)   to[bend left=30] node[above, xshift=-0.5cm] {\tiny $ \dfrac{(H-1) \exp(-(\theta^*)^\top a)}{1+(H-1) \exp(-(\theta^*)^\top a)}$} (x2);
        \draw[->, thick] (x2)   to[bend left=30] node[above, xshift=-0.1cm] {\tiny  $ \dfrac{(H-1) \exp(-(\theta^*)^\top a)}{1+(H-1) \exp(-(\theta^*)^\top a)}$} (dots);
        \draw[->, thick] (dots) to[bend left=30] node[above, xshift=0.1cm] {\tiny  $\dfrac{(H-1) \exp(-(\theta^*)^\top a)}{1+(H-1) \exp(-(\theta^*)^\top a)}$} (xH);
        \draw[->, thick] (xH)   to[bend left=30] node[above, xshift=0.5cm] {\tiny  $\dfrac{(H-1) \exp(-(\theta^*)^\top a)}{1+(H-1) \exp(-(\theta^*)^\top a)}$} (xH1);
        
        \draw[->, thick] (x1) -- node[pos=0.25, below left] {\tiny $\dfrac{1}{1+(H-1) \exp(-(\theta^*)^\top a)}$} (xH2);
        \draw[->, thick] (x2) -- node[pos=0.3, right, xshift=2mm] {\tiny $\dfrac{1}{1+(H-1) \exp(-(\theta^*)^\top a)}$} (xH2);
        \draw[->, thick] (xH) -- node[pos=0.3, right] {\tiny $\dfrac{1}{1 +(H-1) \exp(-(\theta^*)^\top a)}$} (xH2);
        
        \draw[->, thick] (xH1) edge[loop below] node[below, align=center] {reward $= 0$} (xH1);
        \draw[->, thick] (xH2) edge[loop below] node[below, align=center] {reward $= 1$} (xH2);
        
    \end{tikzpicture}%
    }
    \caption{Illustration of the MDP instance we use in our regret lower bound.}
    \label{figure: MDP high kappa star}
\end{figure}

\TheoremLowerBound*

\begin{proof}
We show that the instance of MNL mixture MDP we introduced leads to $ \bar\sigma_{T}^2 $ upper bounded by $1/H$. Note that for any $h\ge 2$ the current state is either $ s_h $ or $s_{H+2}$. Because $s_{H+2}$ is absorbing, the state-value function has no variance in this state. Therefore if the current state is $s_{H+2}$, we have $\Var_{p^*} [V_{h+1}^*(s')|s_{H+2}] = 0$. We now consider the case when the current state is $s_h$ with $h\le H$.
Let us compute the state-value function in both cases:
\begin{align*}
    V_{h+1}^* (s_{H+2}) &= H-h \\
    V_{h+1}^* (s_{h+1}) &= \sum_{j=h+1}^H p^{*} (1-p^{*})^{j-h-1} (H-j)
\end{align*}
where $ p^{*} := p^{*}(s_{H+2}|s_j, \pi_j^*(s_j)) $ which is the same quantity for all $j\in \llbracket H \rrbracket$. Let us define $ \Delta V_{h+1}^* := V_{h+1}^* (s_{H+2}) - V_{h+1}^*(s_{h+1}) $. We have 
\begin{equation*}
    \Delta V_{h+1}^* = H - h - \sum_{j=h+1}^H p^{*} (1-p^{*})^{j-h-1} (H-j) = H-h - \sum_{j=0}^{H-h-1} p^{*} (1-p^{*})^j (H-h-1-j) \,.
\end{equation*}
We apply Lemma~\ref{lemma: modified geometric sum} with $ N = H-h $ and $ q = 1- p^{*} $ to get
\begin{equation*}
    \Delta V_{h+1}^* = \dfrac{1- (1-p^{*})^{H-h}}{1-(1-p^{*})} = \dfrac{1- (1-p^{*})^{H-h}}{p^{*}} \,.
\end{equation*}
Now note that for a binary outcome the variance can be written 
\begin{equation*}
    \Var_{p^*} [ V_{h+1}^*(s') ] = p^{*} (1-p^{*}) (\Delta V_{h+1}^*)^2 \,.
\end{equation*}
Consequently, the normalised variance becomes
\begin{equation*}\label{eq: example kappa star expression kappa}
    \frac{1}{H^2} \Var_{p^*} [ V_{h+1}^*(s') ] = \dfrac{1}{H^2} p^{*} (1-p^{*}) (\Delta V_{h+1}^*)^2 = \dfrac{1-p^{*}}{H^2 p^{*}} \left( 1 - (1-p^{*})^{H-h} \right)^2 \,.
\end{equation*}
Using the Bernoulli inequality we have that $ 1 - (1-p^{*})^{H-h} \le (H-h) p^{*} \le H p^{*} $. We substitute and get 
\begin{equation}\label{eq:lower bound 1}
    \frac{1}{H^2} \Var_{p^*} [ V_{h+1}^*(s') ] \le \dfrac{1-p^{*}}{H^2 p^{*}} H^2 (p^{*})^2 = (1-p^{*}) p^{*} \le p^{*} \,.
\end{equation}
Using the expression of $p^*$ we have
\begin{multline*}
    p^{*} = \frac{1}{1 + (H-1) \exp(-(\theta^*)^\top a^*)} 
    \le \frac{1}{1 + (H-1) \exp((d-1)\bar\Delta)} \\
    = \delta + (d-1) \Delta
    = \frac{1}{H} + \dfrac{(d-1)}{4\sqrt{2HT}} 
    \le \frac{2}{H} 
\end{multline*}
where the first inequality is due to \cite[Equation~(10)]{park2024infinite}, the second equality to \cite[Lemma~G.1]{park2024infinite} and the last inequality is due to $ T \ge (d-1)^2 H^3 /32 $.
Consequently, by substituting into Equation~\eqref{eq:lower bound 1}, we can upper bound the problem-dependent constant $\bar \sigma_T$ by:
\begin{equation}\label{eq:lower bound 2}
    \bar\sigma_T \le \sqrt{ (TH)^{-1} \sumT \sumH \frac{2}{H}} = \sqrt{2} H^{-1/2} \,.
\end{equation}
Moreover, for all $h\ge1$ we have $\lVert\theta_h^*\rVert_2 \le B \le \tfrac{3}{2} (1 + \log(H-1)) $ \citep[Theorem~3]{park2024infinite}.
This MNL mixture MDP instance therefore yields a regret lower bound of
\begin{equation*}
    \Reg_T \ge \Omega\left( d H^{3/2} \sqrt{T} \right) 
    \ge \Omega\left( B d H^{3/2} \sqrt{T} \right)
    \overset{\eqref{eq:lower bound 2}}{\ge} \Omega\left( d H^{2} \bar \sigma_T \sqrt{T} \right) 
\end{equation*}
where the first inequality is by \citep[Theorem~3]{park2024infinite}.
\end{proof}

\section{KL-Constrained Robust MDPs}\label{appendix:KLCR MDP}
In this section we recall and prove Proposition~\ref{prop:regret bound KLCR MDP}.

\PropositionRegretKLCRMDP*

\begin{proof}
Let $s,a\in \cS\times \cA$ and $h\ge 1$.
As the variance and the softmax are translation invariant, we can assume without loss of generality that $\min_{s'\in\cS_{h,s,a}} V_{h+1}^*(s') = 0$. Thus $Z:= \sum_{s'\in\cS_{h,s,a}} \exp(-\eta V_{h+1}^*(s')) \ge 1 $.
For any threshold $v>0$, let $\cS_v := \{ s' \in \cS_{h,s,a} | V_{h+1}^*(s') \ge v \}$. Then
\begin{equation*}
    \Pr[ V_{h+1}^*(s') \ge v | s,a ] = \frac{1}{Z} \sum_{s''\in\cS_v} \exp(-\eta V_{h+1}^*(s''))
    \le  \sum_{s''\in\cS_v} \exp(-\eta V_{h+1}^*(s''))
    \le U \exp(-\eta v) 
\end{equation*}
using $Z\ge 1$ and $|\cS_v|\le U$.
In particular $\Pr[ V_{h+1}^*(s') \ge v | s,a ] \le 1$ as soon as $v> \log U / \eta$.
Using the layer-cake representation
\begin{equation*}
    \E_{s'\sim p^*} [ (V_{h+1}^*(s') )^2 | s,a ] = \int_0^\infty 2 v  \Pr\left[ V_{h+1}^*(s') \ge v | s,a \right] dv \,.
\end{equation*}
Set $v_0 = \log U / \eta$ and split the integral:
\begin{equation*}
    \E_{s'\sim p^*} [ (V_{h+1}^*(s') )^2 | s,a ] = \int_0^{v_0} 2 v  \Pr\left[ V_{h+1}^*(s') \ge v | s,a \right] dv 
    + \int_{v_0}^\infty 2 v  \Pr\left[ V_{h+1}^*(s') \ge v | s,a \right] dv \,.
\end{equation*}
For the first piece, use $\Pr[ V_{h+1}^*(s') \ge v | s,a ] \le 1$:
\begin{equation*}
    \int_0^{v_0} 2 v dv = v_0^2 = \frac{\log^2 U}{\eta^2} \,.
\end{equation*}
For the second piece, use $\Pr[ V_{h+1}^*(s') \ge v | s,a ] \le U \exp(-\eta v)$:
\begin{equation*}
    2U \int_{v_0}^\infty v \exp(-\eta v) dv = 2U \frac{\exp(-\eta v_0)}{\eta} \left( v_0 + \frac{1}{\eta} \right)
    = \frac{2}{\eta} \left( \frac{\log U}{\eta} + \frac{1}{\eta} \right)
\end{equation*}
since $ \exp(-\eta v_0) = 1/U $.
By combining both pieces of the integral and using $\Var_{s'\sim p^*} [ V_{h+1}^*(s')  | s,a ] \le \E_{s'\sim p^*} [ (V_{h+1}^*(s') )^2 | s,a ]$, we have:
\begin{equation*}
    \Var_{s'\sim p^*} [ (V_{h+1}^*(s') ) | s,a ] \le \frac{\log^2 U + 2\log U + 2 }{\eta^2} \,.
\end{equation*}
Thus Theorem~\ref{theorem:regret bound} yields with probability $1-5\delta$ a regret upper bounded by 
\begin{equation*}
    \Reg_T \le \tilde O \left( \frac{\log U d H}{\eta} \sqrt{T } \log \delta^{-1} \right) \,.
\end{equation*}

\end{proof}

\section{Bound on \texorpdfstring{$ \protect \bar\sigma_T $}{bar sigma} Under an Optimal Policy}\label{appendix:bound bar sigma}
In this section we bound the problem-dependent constant $\bar \sigma_T$ under an optimal policy $\pi^*$.

\begin{restatable}{prop}{}\label{prop:bound bar sigma optimal trajectory}
    For any MNL mixture MDP, the problem-dependent $\bar\sigma_T$ under an optimal policy $\pi^*$ is bounded by
    \begin{equation*}
        \E_{\pi^*}[\bar \sigma_T^2 ] \le \frac{1}{4H} \,.
    \end{equation*}
\end{restatable}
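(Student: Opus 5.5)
The natural tool is the law of total variance along an episode: the classical identity that makes the sum of one-step conditional variances of $V^*_{h+1}$ along a $\pi^*$-trajectory equal to the variance of the total return. Concretely, fix an initial state $s_1$ and let $(s_h)_{h}$ be generated by $\pi^*$ and $p^*$. Define the martingale $M_h := \sum_{h'<h} r_{h'}(s_{h'},\pi^*_{h'}(s_{h'})) + V^*_h(s_h)$ for $h=1,\dots,H+1$, with $V^*_{H+1}\equiv 0$.

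\textbf{Step 1: martingale increments.} By the Bellman equation for $\pi^*$, namely $V^*_h(s)=r_h(s,\pi^*_h(s))+\E_{s'\sim p^*}[V^*_{h+1}(s')\mid s,\pi^*_h(s)]$, the increment is
\begin{equation*}
    M_{h+1}-M_h = V^*_{h+1}(s_{h+1})-\E[V^*_{h+1}(s')\mid s_h,\pi^*_h(s_h)] .
\end{equation*}
Its conditional second moment is therefore $\Var_{p^*}[V^*_{h+1}(s')\mid s_h,\pi^*_h(s_h)]$.

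\textbf{Step 2: orthogonality of increments.} The increments are orthogonal, so
\begin{equation*}
    \E_{\pi^*}\Big[\sumH \Var_{p^*}[V^*_{h+1}(s')\mid s_h,\pi^*_h(s_h)]\Big]=\Var_{\pi^*}\Big[\sumH r_h(s_h,\pi^*_h(s_h))\Bigm| s_1\Big].
\end{equation*}
This uses $M_1=V^*_1(s_1)$ deterministic and $M_{H+1}$ equal to the total return.

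\textbf{Step 3: bound the return variance.} Since rewards lie in $[0,1]$, the return lies in $[0,H]$. Popoviciu's inequality then bounds its variance by $H^2/4$.

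\textbf{Step 4: average over episodes.} Apply this to each episode $t$, taking $s_{t,1}$ as given and conditioning on it, and sum over $t$. This gives
\begin{equation*}
    \E_{\pi^*}[\bar\sigma_T^2]=\frac{1}{TH}\sumT \frac{1}{H^2}\,\E_{\pi^*}\Big[\sumH \Var(\cdot)\Big]\le \frac{1}{TH}\cdot T\cdot\frac{1}{H^2}\cdot\frac{H^2}{4}=\frac{1}{4H}.
\end{equation*}

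\textbf{Main obstacle.} The only delicate point is Step 2, where the cross terms $\E[(M_{h+1}-M_h)(M_{k+1}-M_k)]$ for $h<k$ must vanish. I would establish this by the tower property: condition on the history up to $s_k$, and use that the increment at step $k$ has zero conditional mean given $(s_k,\pi^*_k(s_k))$, by the Markov property. Measurability is immediate because $\pi^*$ is deterministic, as fixed in Equation~\eqref{eq:well state def pi star}. One must also interpret $\E_{\pi^*}$ as replacing the learner's trajectory $(s_{t,h})$ by trajectories of $\pi^*$ started from the same initial states $s_{t,1}$. Under that reading the result holds for any initial state distribution, since the bound is pointwise in $s_{t,1}$.
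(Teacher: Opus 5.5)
Your proposal is correct and follows the paper's proof essentially step for step: the martingale $M_h$ built from $V^*_h$ and accumulated rewards, orthogonality of its increments, Popoviciu's inequality on the return, and averaging over episodes. Two details the paper leaves implicit are made explicit in your version: that $\E[\Delta_h^2]$ equals the expected one-step conditional variance, and that everything is conditioned on $s_{t,1}$.
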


\begin{proof}
Let $G := \sumH r_h(s_h, a_h)$ be the total return of a trajectory $(s_h, a_h)_h$. 
By Popoviciu's inequality we have
\begin{equation}\label{eq:proof low sigma 1}
    \Var_{\pi^*, p^*} [G] \le \frac{H^2}{4} \,.
\end{equation}
Under $\pi^*$ and $p^*$, the process $M_h := V_h^*(s_h) + \sum_{j<h} r_j(s_j, a_j) $ is a martingale associated with the filtration $\cF_h := \{ s_1, a_1, \dots, s_h \}$.
The martingale increment is $\Delta_h = V_{h+1}^*(s_{h+1}) - \E_{s'\sim p^*} [V_{h+1}^*(s') | s_h, a_h] $.
We have that
\begin{align}
    \Var_{\pi^*, p^*} [G] &= \Var_{\pi^*, p^*}[ M_{H+1} - M_1 ] &&(M_1 \text{ is deterministic}) \nonumber \\
    &= \Var_{\pi^*, p^*} \left[ \sumH \Delta_h \right] \nonumber \\
    &= \sumH \Var_{\pi^*, p^*}[\Delta_h] +2 \sum_{h=1}^{H-1} \sum_{j=h+1}^H \mathrm{Cov}[\Delta_h, \Delta_{j}] \,. \label{eq:proof low sigma 2}
\end{align}
We study the covariance term, for martingale increments we have $\E[\Delta_h|\cF_h] = 0$, thus
\begin{equation*}
    \mathrm{Cov}[\Delta_h, \Delta_{j}] = \E[ \Delta_h \Delta_{j} ] = \E[\Delta_h \E[\Delta_{j}|\cF_{j}]] = 0 \,.
\end{equation*}
We substitute into Equation~\eqref{eq:proof low sigma 2} and get
\begin{equation*}
    \Var_{\pi^*, p^*} [G] = \sumH \Var_{\pi^*, p^*}[\Delta_h] \le \frac{H^2}{4} \,.
\end{equation*}
Now note that $\bar\sigma_T$ is defined as $\E_{\pi^*}[\bar\sigma_T^2] = (TH)^{-1} \sum_t H^{-2} \Var_{\pi^*, p^*} [G]$. Therefore
\begin{equation*}
    \E_{\pi^*}[\bar \sigma_T^2 ] \le \frac{1}{4H} \,.
\end{equation*}
\end{proof}

\section{Experiments}\label{appendix:experiments}
In this section, we evaluate Algorithm~\ref{algo:learning routine 2} numerically. We empirically validate the $\smash{H^{3/2}}$ rate in the regret bound predicted by Theorem~\ref{theorem:regret bound} and benchmark against the prior state-of-the-art.
We consider a challenging MNL mixture MDP family used by \cite{park2024infinite} to derive the $\smash{\Omega(dH^{3/2}\sqrt{T})}$ lower bound.
The structure of this MDP family is described in Appendix~\ref{appendix:proof lower bound}, see Figure~\ref{figure: MDP high kappa star}.
The instance has feature dimension $d=2$ and inhomogeneous transition parameters $\smash{ \theta_h^* \in \{ -\bar \Delta,  \bar \Delta \}^d }$ sampled independently per stage. 

We run our algorithm over 40 independent seeds with the parameters $\tau=80, \lambda_0=1, \lambda=10$ and a learning rate in the OMD procedure of $\eta=20$. We also shrink the size of the confidence ellipsoid $\cC_{t,h}(\delta)$ by introducing a scale parameter $\mathrm{beta\_scale}=0.01$ and set $\smash{\tilde\beta_t = \mathrm{beta\_scale} ( (\tfrac{2}{3} d \log(t/\delta))^{1/2} + 24 B \sqrt{d} ) }$. To tackle the non-concave maximisation problem in the optimistic reward definition (see Equation~\eqref{eq:definition optimistic reward}), we employ the Frank-Wolfe algorithm to find a local optimum within the confidence set.
We compare our algorithm to the \textsc{UCRL-MNL-OL} algorithm proposed by \cite{li2024provably}. We run it with the same seeds and parameters $ \lambda=10, \eta=10 $. We also use the scaling parameter $\mathrm{beta\_scale}=0.02$ and set $\smash{\tilde\beta_t = \mathrm{beta\_scale} \sqrt{d} \log U \log(tH/\delta) }$.

\begin{figure}[!ht]
    \centering
    \includegraphics[width=0.9\textwidth]{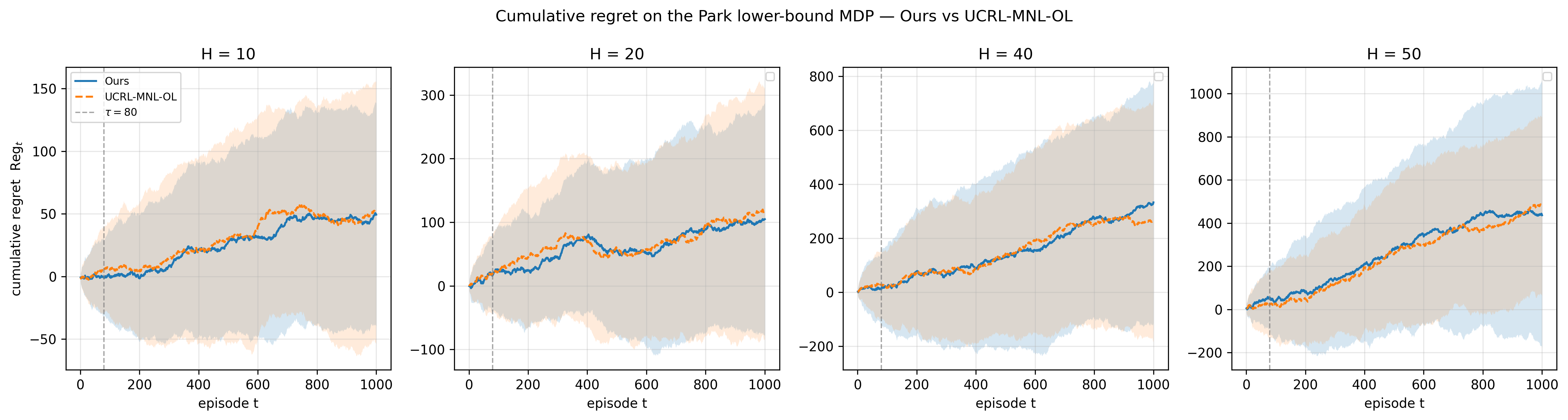}
    \caption[]{Cumulative regret $\Reg_T$ for different episode lengths $H$ as a function of the episode $t$. We plot the means $\pm 1$ standard deviation across 40 seeds.}
    \label{figure:experiment cumulative regret}
\end{figure}

We compute the per-episode regret as $\smash{ V_1^*(s_{t,1}) - \sum_h r_h(s_{t,h}, \pi_{t,h}(s_{t,h})) }$, i.e. the difference between the true expected return under the optimal policy and the sum of rewards along the realised trajectory. The per-episode regret can be negative due to trajectory randomness, but the cumulative regret is positive in expectation.
In Figure~\ref{figure:experiment cumulative regret}, we plot the cumulative regret $\Reg_t$ for different episode lengths $H$ as a function of the episode $t$. As both algorithms use the same OMD procedure, the regret curves have similar shapes. 
Interestingly, while the initial exploration phase was anticipated to be costly, it proved to be a highly effective strategy on this specific MDP instance.
In Figure~\ref{figure:experiment scaled regret}, we plot the cumulative regret $\Reg_T$ at $T=1000$ episodes normalised by $\smash{H^{3/2}}$ as a function of the horizon $H$. For both algorithms, the ratio $\smash{\Reg_T/H^{3/2}}$ is approximately constant, consistent with the $\smash{\tilde O(dH^{3/2}\sqrt{T})}$ regret scaling. 
While this was theoretically expected for our algorithm, it is less expected from the theoretical guarantees of the \textsc{UCRL-MNL-OL} algorithm \citep[Theorem~2]{li2024provably}.

These experiments suggest that both algorithms behave similarly, and that while the exploration phase may not be necessary in practicen it allows us to derive a sharper analysis.

\begin{figure}[!ht]
    \centering
    \includegraphics[width=0.8\textwidth]{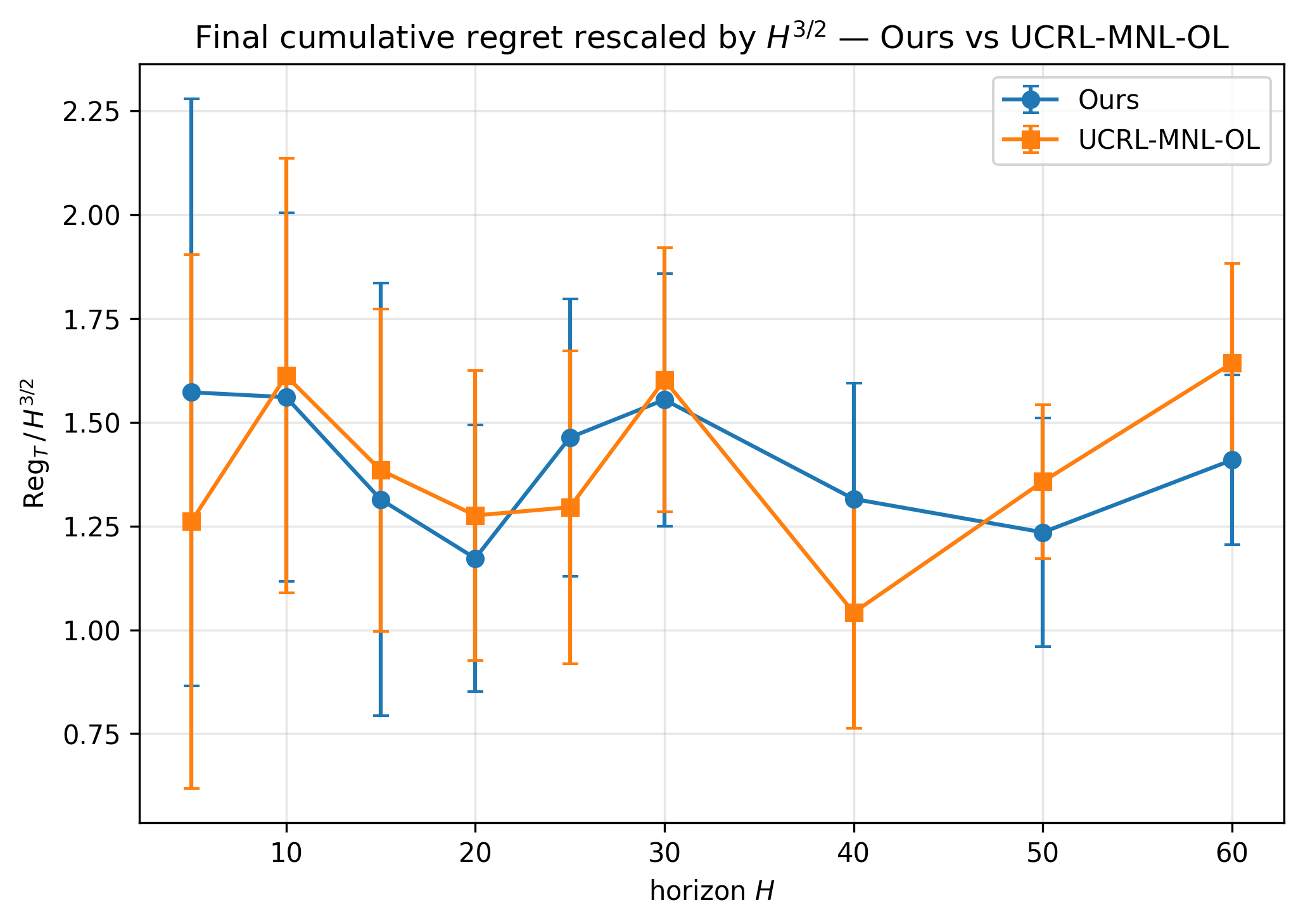}
    \caption[]{Cumulative regret $\Reg_T$ at $T=1000$ episodes normalised by $\smash{H^{3/2}}$ as a function of the horizon $H$. We plot the means $\pm 1$ standard deviation across 40 seeds.}
    \label{figure:experiment scaled regret}
\end{figure}

\section{Auxiliary Result}\label{appendix:auxiliary results}

\begin{restatable}{lmm}{}\label{lemma: modified geometric sum}
For any $N\ge 1$ and $q\in ]0, 1[$, we have
\begin{equation*}
    N-1 - (1-q) \sum_{k=0}^{N-1} q^k (N-1-k) = \dfrac{1 - q^N}{1-q} - 1 \,.
\end{equation*}
\end{restatable}

\begin{proof}
Let $N\ge 1$ and $ q \in ]0,1[ $. We define $ S := \sum_{k=0}^{N-1} q^k (N-1-k) $. We have
\begin{align*}
    S- qS &= \sum_{k=1}^{N-1} \left[ (N-1-k) - (N-1-(k-1)) \right] + N-1 \\
    &= - \sum_{k=1}^{N-1} q^k + N-1 \\
    (1-q) S &= -q \dfrac{1-q^{N-1}}{1-q} + N-1 \,.
\end{align*}
Thus we get
\begin{equation*}
    N-1 - (1-q) S = \dfrac{q-q^N}{1-q} = \dfrac{1-q^N}{1-q} - 1 
\end{equation*}
which concludes the proof.
\end{proof}

\end{document}